\title{Why should autoencoders work?}
\author{\name Matthew D. Kvalheim \email kvalheim@umbc.edu \\
      \addr Department of Mathematics and Statistics\\
      University of Maryland, Baltimore County, MD, United States.
      \AND
      \name Eduardo D. Sontag \email sontag@sontaglab.org \\
      \addr Departments of Electrical and Computer Engineering and Bioengineering\\
      Northeastern University, Boston, MA, United States.} 
\title{Why should autoencoders work?}
\newcommand{\concept}[1]{\textbf{#1}}
\newcommand{\N}{\mathbb{N}}
\newcommand{\Z}{\mathbb{Z}}
\newcommand{\R}{\mathbb{R}}
\newcommand{\T}{T}
\newcommand{\id}{\textnormal{id}}
\newcommand{\interior}{\textnormal{int}}
\newcommand{\cl}{\textnormal{cl}}
\newcommand{\rch}{r_K}
\newcommand{\dist}[2]{\textnormal{dist}(#1, #2)}
\DeclarePairedDelimiter\norm{\lVert}{\rVert}
\theoremstyle{definition}
\newtheorem{Lem}{Lemma}
\newtheorem{Th}{Theorem}
\newtheorem{Co}{Corollary}
\newtheorem*{Quest-non}{Question}
\newtheorem*{Th-non}{Theorem}
\newcommand{\thistheoremname}{}
\newtheorem*{genericthm}{\thistheoremname}
{\renewcommand{\thistheoremname}{Theorem~\ref{#1}$'$}%
	\begin{genericthm}}
	{\end{genericthm}}
\newtheorem*{Def*}{Definition}
\newtheorem{Ex}{Example}
\newtheorem{Rem}{Remark}
\newcommand{\pic}[2]{\includegraphics[scale=#1]{Figs/#2}}
\newcommand{\picc}[2]{\begin{center}\pic{#1}{#2}\end{center}}
\newcommand{\DT}{\delta(T)}
\newcommand{\drch}{r^*_{K,k}}
\newcommand{\MM}{{\mathcal M}_{n,k}}
\newcommand{\minp}[2]{\begin{minipage}{#1\textwidth}#2\end{minipage}}
\begin{document}

\maketitle

\begin{abstract}	
Deep neural network autoencoders are routinely used computationally for model reduction. They allow recognizing the intrinsic dimension of data that lie in a $k$-dimensional subset $K$ of an input Euclidean space $\mathbb{R}^n$. The underlying idea is to obtain both an encoding layer that maps $\mathbb{R}^n$ into $\mathbb{R}^k$ (called the bottleneck layer or the  space of latent variables) and a decoding layer that maps $\mathbb{R}^k$ back into $\mathbb{R}^n$, in such a way that the input data from the set $K$ is recovered when composing the two maps. This is achieved by adjusting parameters (weights) in the network to minimize the discrepancy between the input and the reconstructed output. Since neural networks (with continuous activation functions) compute continuous maps, the existence of a network that achieves perfect reconstruction would imply that $K$ is homeomorphic to a $k$-dimensional subset of $\mathbb{R}^k$, so clearly there are topological obstructions to finding such a network. On the other hand, in practice the technique is found to ``work'' well, which leads one to ask if there is a way to explain this effectiveness. We show that, up to small errors, indeed the method is guaranteed to work. This is done by appealing to certain facts from differential topology. A computational example is also included to illustrate the ideas.
\end{abstract}


\section{Introduction}


Many real-world problems require the analysis of large numbers of data points inhabiting some Euclidean space $\R^n$.
The ``manifold hypothesis'' \citep{fefferman2016testing} postulates that these points lie on some $k$-dimensional submanifold with (or without) boundary $K\subseteq \R^n$, so can be described locally by $k < n$ parameters.
When $K$ is a linear submanifold, classical approaches like principal component analysis and multidimensional scaling are effective ways to learn these parameters.
But when $K$ is nonlinear, learning these parameters is the more challenging ``manifold learning'' problem studied in the rapidly developing literature on  ``geometric deep learning'' \citep{bronstein2017geometric}.

One popular approach to this problem relies on deep neural network \concept{autoencoders} (also called ``replicators'' \citep{hecht-nielsen1995replicator}) of the form $G\circ F$, where the output of the \concept{encoder} $F\colon \R^n\to \R^k$ is the desired $k<n$ parameters, $G\colon \R^k\to \R^n$ is the \concept{decoder}, and $F$ and $G$ are continuous. 
See Figure~\ref{fig:generic_autoencoder} for an illustration.
\begin{figure}[!htb]
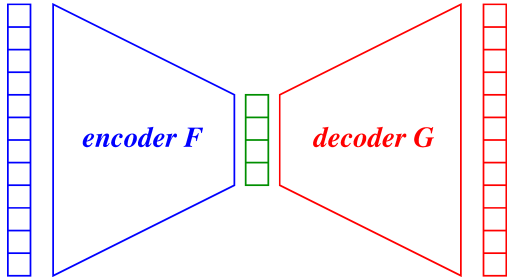

\picc{0.3}{autoencoder_generic.png}
\caption{An autoencoder consists of an encoding layer, which maps inputs that lie in a subset $K$ of $\R^n$ ($n=12$ in this illustration) into a hidden or latent layer of points in $\R^k$ (here $k=4$), followed by a decoding layer mapping $\R^k$ back into $\R^n$. The goal is to make the decoded vectors (in red) match the data vectors (in blue). In a perfect autoencoder, $G(F(x))=x$ for all $x$ in $K$. Due to topological obstructions, a more realistic goal is to achieve $G(F(x))\approx x$ for all $x$ in a large subset of $K$.}
\label{fig:generic_autoencoder}
\end{figure}
The goal is to learn $F$, $G$ to create a perfect autoencoder, one such that $G(F(x))=x$ for all $x\in K$.
The latter condition implies that $F|_K:K\to F(K)\subseteq \R^k$ is a homeomorphism, since it is a continuous map with a continuous inverse $G\colon F(K)\to K$.
Thus, a perfect autoencoder $F$, $G$ exists if and only if the $k$-dimensional $K$ is homeomorphic to a subset of $\R^k$, so there are topological obstructions making this goal impossible in general, as observed in \citet{Batson2021}.

And yet, the wide practical applicability of the method evidences remarkable empirical success from autoencoders even when $K$ is not homeomorphic to such a subset of $\R^k$.
(We give an illustrative numerical experiment in \S\ref{sec:numerical}.) 
How can this be?

This apparent paradox is resolved by the following Theorem~\ref{th:pac}, which asserts that the set of $x\in K$ for which $G(F(x))\not \approx x$ can be made arbitrarily small with respect to the ``intrinsic measures'' $\partial \mu$ and $\mu$ (defined in \S\ref{subapp:diff-top}) on $\partial K$ and $K$ generalizing length and surface area.
For the statement, $\mathcal{F}^{\ell,m}$ denotes any set of continuous functions $\R^\ell\to \R^m$ with the ``universal approximation'' property that any continuous function $H\colon \R^\ell \to \R^m$ can be uniformly approximated arbitrarily closely on any compact set $L\subseteq \R^\ell$ by some $\tilde{H}\in \mathcal{F}^{\ell,m}$.

\begin{restatable}[]{Th}{ThmPAC}\label{th:pac} 
Let $k,n\in \N$ and $K \subseteq \R^n$ be a union of finitely many disjoint compact smoothly embedded submanifolds with boundary each having dimension less than or equal to $k$.
For each $\delta > 0$ and finite set $S\subseteq K$, there is a closed set $K_0\subseteq K$ disjoint from $S$ with intrinsic measures $\mu(K_0)< \delta$, $\partial \mu(K_0\cap \partial K) < \delta$ such that $M\setminus K_0$ is connected for each component $M$ of $K$, and the following property holds. 
For each $\varepsilon >0$ there are functions $F\in \mathcal{F}^{n,k}$, $G\in \mathcal{F}^{k,n}$ such that 
\begin{equation}\label{eq:main-sup-eps}
\sup_{x\in K\setminus K_0} \|G(F(x))-x\| < \varepsilon.
\end{equation} 
\end{restatable}

In this paper, we adopt the standard convention that manifolds are the special case of manifolds with boundary for which the boundary is empty.

Theorem~\ref{th:pac} may be interpreted as a ``probably approximately correct (PAC)'' theorem for autoencoders complementary to recent PAC theorems obtained in the manifold learning literature \citep{fefferman2016testing,pmlr-v75-fefferman18a,fefferman2023fitting}.
Our theorem asserts that, for any finite training set $S$ of data points in $K$, there is an autoencoder $G\circ F$ with error smaller than $\varepsilon$ on $S$ such that the ``generalization error'' will also be uniformly smaller than $\varepsilon$ on any test data in $K\setminus K_0$.

\begin{Rem}\label{rem:neural-nets}
In particular, Theorem~\ref{th:pac} applies when $\mathcal{F}^{\ell,m}$ is a collection of possible functions $\R^\ell\to \R^m$ that can be produced by neural networks. Neural networks, particularly in the context of deep learning, have been extensively studied for their ability to approximate continuous functions. Specifically, the Universal Approximation Theorem states that feedforward networks (even with just one hidden layer) can approximate scalar continuous functions on compact subsets of $\R^\ell$ (and thus, componentwise, can approximate vector functions as well), under mild assumptions on the activation function. This result was proved for sigmoidal activation functions in \citet{cybenko1989} and generalized in \citet{hornik1989}. Upper bounds on the numbers of units required (in single-hidden layer architectures) were given independently in \citet{jones1992} and \citet{barron1993} for approximating functions whose Fourier transforms satisfy a certain integrability condition, providing a least-squares error rate $O(n^{-1/2})$, where $n$ is the number of neurons in the hidden layer, and similar results were provided in \citet{donahue1997} for (more robust to outliers) approximations in $L^p$ spaces with $1<p<\infty$. Although these theorems show that single-hidden layer networks are sufficient for universal approximation of continuous functions, it is known from practical experience that deeper architectures are often necessary or at least more efficient. There are theoretical results justifying the advantages of deeper networks. For example, \citet{two-layer} showed that the approximation of feedback controllers for non-holonomic control systems and more generally for inverse problems requires more than one hidden layer, and deeper networks (those with more layers) can represent certain functions more efficiently than shallow networks, in the sense that they require exponentially fewer parameters to achieve a given level of approximation  \citep{eldan-shamir2016,telgarsky2016}.
\end{Rem}

\begin{Rem}
While the intrinsic measures $\mu$, $\partial \mu$ are a convenient choice for the statement of Theorem~\ref{th:pac}, Theorem~\ref{th:pac} still holds verbatim if $\mu$, $\partial \mu$ are replaced by any finite Borel measures $\nu$, $\partial \nu$ that are absolutely continuous with respect to $\mu$, $\partial \mu$, respectively.
Moreover, Dr. Joshua Batson suggested to us the observation that Theorem~\ref{th:pac} implies that the $L^2(\nu)$ loss $$\int_{K}\|G(F(x))-x\|^2\, d\nu(x)$$ can always be made arbitrarily small (this includes the case $\nu = \mu$).
See Remarks~\ref{rem:change-of-measure}, \ref{rem:l2-linf-error} in \S \ref{sec:proofs} for a detailed explanation of these observations and their implications for autoencoder training.
\end{Rem}

\begin{Rem}\label{rem:no-blast}
The fact that one can pick $M\setminus K_0$ to be connected for each component $M$ of $K$, which implies that also each encoded ``good set'' $F(M\setminus K_0)$ is connected, makes Theorem~\ref{th:pac} particularly informative and interesting. For example, suppose that our data manifold $K$ is connected. Then Theorem~\ref{th:pac} guarantees that $K\setminus K_0$ is connected. 
In particular, this property implies the ability to ``walk along'' $K\setminus K_0$ (e.g., for interpolation of images represented by points in $K$) using the latent space, since for each $x,y\in K\setminus K_0$ it implies the existence of a smooth path $t\mapsto \gamma(t)$ from $F(x)$ to $F(y)$ in $F(K\setminus K_0)$ such that, up to $\varepsilon$-small errors, the decoded path $G(\gamma(t))$ goes from $x$ to $y$ while staying in $K$.
Also, if this property were not claimed, then a much simpler proof could be based on splitting up $K$ (up to a set of measure zero) into a potentially large number of submanifolds and patching together autoencoders for each piece. 
\end{Rem}

\begin{Rem}\label{rem:numerical_may_not_give_bounded}
One should emphasize that Theorem~\ref{th:pac} is a statement about the fundamental capabilities of autoencoders, but it does not imply that numerical learning algorithms will always succeed at finding an autoencoder that satisfies the connectedness constraint (or the desired bounds, for that matter). Our numerical experiments illustrate this phenomenon. 
For example, Figure~\ref{fig:bottleneck} shows a learning run in which the encoded good set is (up to sampling resolution) connected, but Figure~\ref{fig:bottleneck_take2} shows a learning instance in which it is not.
\end{Rem}
The remainder of the paper is organized as follows.
Theorem~\ref{th:pac} is proved in \S \ref{sec:proofs}.
The numerical experiments are in \S\ref{sec:numerical}.
A result ruling out certain extensions of Theorem~\ref{th:pac} is proved in \S \ref{sec:optimality}.
\S \ref{sec:conclusion} includes further discussion and directions for future work.
An appendix contains the implementation code for these experiments.
Another appendix reviews some notions of topology and related concepts that are used in the paper.

\section{Proof of Theorem~\ref{th:pac}}\label{sec:proofs}
In this section we prove Theorem~\ref{th:pac}.
See Appendix~\ref{app:review} (\S \ref{subapp:diff-top}) for a description of the notions of ``intrinsic measure'' and ``measure zero'' discussed herein.
An outline of our strategy for the proof is as follows.

First (Lemma~\ref{lem:cut-locus}), when $K$ consists of a single $k$-dimensional component, we construct a subset $C\subseteq K$ such that $C$ is closed and has measure zero in $K$, $C\cap \partial K$ has measure zero in $\partial K$, and $K\setminus C$ is connected and admits a smooth embedding $K\setminus C \hookrightarrow \R^k$.
We next show (Lemma~\ref{lem:cut-locus-move-off-data}) that $C$ can additionally be chosen disjoint from any given finite subset $S \subseteq K$.
Successive application of these results extend them to the case that $K$ consists of at most finitely many components, each having dimension less than or equal to $k$.
We then construct (Lemma~\ref{lem:eps-zero}) a suitable ``thickening'' $K_0\subseteq K$ of $C$ that has arbitrarily small positive intrinsic measure, but otherwise satisfies the same properties as $C$.
This thickening is such that the restriction of the smooth embedding $K\setminus C\hookrightarrow \R^k$ to $K\setminus K_0$ extends to a smooth ``encoder'' map $\tilde{F}\colon \R^n\to \R^k$.
Defining the smooth ``decoder'' map $\tilde{G}\colon \R^k\to \R^n$ to be any smooth extension of the inverse $(\tilde{F}|_K)^{-1}\colon F(K)\to K\subseteq \R^n$ yields an autoencoder with perfect reconstruction on $K\setminus K_0$, i.e., $\tilde{G}(\tilde{F}(x))=x$ for all $x\in K_0$.
Finally, since we consider neural network (or other) function approximators that can uniformly approximate---but not exactly reproduce---all such functions on compact sets, we prove (Theorem~\ref{th:pac}) that sufficiently close approximations $F$, $G$ of $\tilde{F}$, $\tilde{G}$ will make $\|G(F(x))-x\|$ arbitrarily uniformly small for all $x\in K\setminus K_0$.

The proof of Lemma~\ref{lem:cut-locus} constructs $C$ as a union of ``stable manifolds'' $W^s(q)$ of equilibria $q$ of a certain gradient vector field (\S \ref{subapp:diff-top}).
Such stable manifolds are fundamental in Morse theory \citep{pajitnov2006circle}.

\begin{Lem}\label{lem:cut-locus}
Let $M$ be a $k$-dimensional connected compact smooth manifold with boundary.
There exists a set $C\subseteq M$ such that $C$ is closed and has measure zero in $M$, $C\cap \partial M$ has measure zero in $\partial M$, and $M\setminus C$ is connected and admits a smooth embedding into $\R^k$.
\end{Lem} 
\begin{Rem}\label{rem:alt-riem}
If $\partial M = \varnothing$, an alternative proof equips $M$ with any Riemannian metric, fixes $p\in M$, and defines $C\subseteq M$ to be the cut locus \citep[Def.~III.4.3]{sakai1996riemannian} with respect to $p$ and the metric.
This $C$ is closed and has measure zero, and $M\setminus C$ is diffeomorphic to $\R^k$ \citep[Lem.~III.4.4]{sakai1996riemannian}.
\end{Rem}

\begin{proof}
\textbf{Step 1 (setup).} Fix any $p\in \interior(M)$ and Riemannian metric on $M$.
There is a smooth function $\varphi\colon M\to [0,1]$ such that all equilibria of the negative gradient vector field $-\nabla \varphi$ are hyperbolic (\S \ref{subapp:diff-top}) and belong to $\interior(M)$, $\{p\}=\varphi^{-1}(0)$ is the unique local minimum, and $\partial M = \varphi^{-1}(1)$ \citep[Thm~3]{koditschek1990robot}.

\textbf{Step 2 (construction of $C$).} Define $$C \coloneqq \bigcup \{W^s(q)\colon \nabla \varphi(q) = 0, q \neq p\},$$
where $W^s(q) \subseteq M$ is the set of points whose $(-\nabla \varphi)$-trajectories converge to the equilibrium $q\in \interior(M)$ as $t\to \infty$.

\textbf{Step 3 ($C$ is closed and connected).} The complement $W^s(p) = M \setminus C$ of $C$ is open and connected since it is the basin of attraction of $p$ for $-\nabla \varphi$ (\S \ref{subapp:diff-top}), so $C$ is closed.

\textbf{Step 4 ($C$ has measure zero).} Since $C\subseteq M$ is a union of smoothly embedded submanifolds with boundary $N$ satisfying $\dim N < \dim M$ and $\partial N = N \cap \partial M$ \citep[Prop.~1.3.2.13]{pajitnov2006circle}, $C$ and $C\cap \partial M$ have measure zero in $M$ and $\partial M$, respectively.

\textbf{Step 5 ($C$ admits a smooth embedding into $\R^k$).} Finally, since $\interior(M)\setminus C$ is the basin of attraction of $p$ for the rescaled vector field $-(1-\varphi)(\nabla \varphi)$ there is a diffeomorphism $F\colon \interior(M)\setminus C \approx \R^k$  \citep[Thm~3.4]{wilson1967structure}, so if  $\Phi^1\colon M\to \interior(M)$ is the smooth embedding sending points $x(0)\in M$ to the values $x(1)$ of their $(-\nabla \varphi)$-trajectories $x(t)$, then $F\circ \Phi^1\colon M\setminus C\hookrightarrow \R^k$ is the desired smooth embedding.
\end{proof}

The proof of Lemma~\ref{lem:cut-locus-move-off-data} constructs a ``diffeotopy'', a smooth $1$-parameter family of diffeomorphisms, that moves $C$ to a subset disjoint from $S$ satisfying the same properties as $C$.
The use of diffeotopies (or ``ambient isotopies'') is a standard technique in differential topology \citep[Ch.~8]{hirsch1994differential}.  

\begin{Lem}\label{lem:cut-locus-move-off-data}
In the setting of Lemma~\ref{lem:cut-locus},  $C$ can be chosen disjoint from any finite subset $S\subseteq M$.  
\end{Lem}

\begin{proof}
If $M$ is diffeomorphic to a point or an interval, then $C$ can be taken to be the empty set.
If $M$ is diffeomorphic to a circle, then $C$ can be taken to be any point disjoint from $S$.
It remains only to consider the case that $\dim M \geq 2$ \citep[Ex.~15-13]{lee2013smooth}.
Since Lemma~\ref{lem:cut-locus} implies that $C$ does not contain any component of $\partial M$, there is a diffeotopy $\partial J_t$ of $\partial M$, $t\in [0,1]$, such that the image of $S \cap \partial M$ under the diffeomorphism $\partial J_1\colon \partial M\to \partial M$ does not intersect $C$, that is, it satisfies $\partial J_1(S \cap \partial M) \cap C = \varnothing$ \citep[p.~186]{hirsch1994differential}, \citep{michor1994n}.
The diffeotopy $\partial J_t$ extends to one generating a diffeotopy $J_t$ of $M$, $t\in [0,1]$, such that the diffeomorphism $J_1\colon M \to N$ satisfies $J_1(S)\cap C = \varnothing$  \citep{michor1994n}, 
 \citep[Thm~8.1.3, Thm~8.1.4]{hirsch1994differential}.\footnote{If $\partial M=\varnothing$, then $\partial J_t$ is the empty diffeotopy, so any diffeotopy $J_t$ is automatically an extension of $\partial J_t$. Less pedantically, in the case that $\partial M = \varnothing$, there are simply fewer constraints on $J_t$.}
Hence the image $\tilde{C}\coloneqq J_1^{-1}(C)$ of $C$ under the diffeomorphism $J_1^{-1}$ is a closed measure zero set disjoint from $S$, $M\setminus \tilde{C}$ is connected, and $C\cap \partial M$ has measure zero in $\partial M$. 
Moreover, if $F\colon M \setminus C \to N \subseteq \R^k$ is the smooth embedding from the statement of Lemma~\ref{lem:cut-locus}, then $F\circ J_1\colon M\setminus \tilde{C} \to N \subseteq \R^k$ is a smooth embedding.
Upon replacing $C$ with $\tilde{C}$, this finishes the proof.
\end{proof}

Lemma~\ref{lem:eps-zero} makes use of the ``intrinsic measure'' $\mu$ (\S \ref{subapp:diff-top}) on any union $K$ of smoothly embedded submanifolds of a Euclidean space that is induced by the Riemannian density \citep[p.~428]{lee2013smooth} of the restriction of the Euclidean metric to each component of $K$.
We use the notation $\partial \mu$ for the intrinsic measure of $\partial K$.
Any measure zero subset $C$ of $K$ in the sense of \citet[p.~128]{lee2013smooth} has intrinsic measure $\mu(C)=0$, and similarly $\partial \mu(C\cap \partial K)= 0$ when $C\cap \partial K$ has measure zero in $\partial K$.

\begin{Rem}\label{rem:length-area}
If $A$ is a measurable subset (\S \ref{subapp:borel}) of an $\ell$-dimensional component $M$ of $K$, then $\mu(A)$ is simply the $\ell$-dimensional volume  of $A$.
For example, $\mu(A)$ is the length of $A$ when $k=1$, the surface area of $A$ when $k=2$, the volume of $A$ when $k=3$, and so on.    
\end{Rem}

The proof of Lemma~\ref{lem:eps-zero} follows the outline at the beginning of this section. 
To ensure that the complements $M\setminus K_0$ of the ``thickening'' $K_0$ of $C$ within each component $M$ of $K$ are connected, we construct each $M\setminus K_0$ as a connected component of a sufficiently big sublevel set of a suitable function $h\colon K\setminus C\to [0,\infty)$.
See Appendix~\ref{app:review} (\S \ref{subapp:diff-top}) for a discussion of the smooth extension lemma used in the proof of Lemma~\ref{lem:eps-zero}.
\begin{Lem}\label{lem:eps-zero}
Let $k,n\in \N$ and  $K \subseteq \R^n$ be a union of finitely many disjoint compact smoothly embedded submanifolds with boundary each having dimension less than or equal to $k$.
For each $\delta > 0$ and finite set $S\subseteq K$, there are smooth functions $F\colon \R^n\to \R^k$, $G\colon \R^k\to \R^n$ and a closed set $K_0\subseteq K$ disjoint from $S$ such that $\mu(K_0)< \delta$, $\partial \mu(K_0\cap \partial K) < \delta$, $M\setminus K_0$ is connected for each component $M$ of $K$, and  $$G\circ F|_{K\setminus K_0}=\id_{K\setminus K_0}.$$
\end{Lem} 

\begin{proof}
Each component $M$ of $K$ is a connected compact smooth manifold with boundary of dimension less than or equal to $k$.
Applying Lemmas~\ref{lem:cut-locus}, \ref{lem:cut-locus-move-off-data} to each such component yields  the existence of a closed set $C\subseteq K$ disjoint from $S$ such that $C$ has measure zero in $K$, $C\cap \partial K$ has measure zero in $\partial K$, and $M\setminus C$ is connected and admits a smooth embedding into $\R^k$ for each component $M$ of $K$.
Compressing the images of these smooth embeddings into arbitrarily small disjoint disks by post-composing each with a suitable diffeomorphism of $\R^k$ produces a smooth embedding $F_0\colon K\setminus C\to \R^k$.

Let $h\colon K\setminus C \to [0,\infty)$ be any continuous function such that $\{h\leq r\}$ is compact for every $r\geq 0$ \citep[Prop.~2.28]{lee2013smooth}.
Arbitrarily select one point in each component of $K$, and let $U_j\subseteq K$ be the open set equal to the union of the components of $\{h< j\}$ containing each of these points.
The properties of $h$ imply that the increasing union $\bigcup_{j\in \N}U_j = K \setminus C$.
Thus, finiteness of $S$, compactness of $K$, and outer regularity of the intrinsic measures (\S\ref{app:review})  imply the existence of $N\in \N$ such that $K_0\coloneqq K\setminus U_N$ satisfies $K_0 \cap S = \varnothing$, $K_0 \supseteq C$, $\mu(K_0) < \delta$ and $\partial \mu(K_0 \cap \partial K ) < \delta$.

Defining 
$F\colon \R^n \to \R^k$ and $G\colon \R^k\to \R^n$ respectively to be any smooth extensions \citep[Lem.~2.26]{lee2013smooth} of $F_0|_{\cl(U_N)}$ and $(F_0|_{\cl(U_N)})^{-1}\colon F_0(\cl(U_N)) \to \cl(U_N) \subseteq \R^n$ completes the proof.
\end{proof}

Assume given for each $\ell, m\in \N$ a collection $\mathcal{F}^{\ell,m}$ of continuous functions $\R^\ell\to \R^m$ with the following ``universal approximation'' property: for any $\varepsilon > 0$, compact subset $L\subseteq \R^\ell$, and continuous function $H\colon \R^\ell\to \R^m$, there is $\tilde{H}\in \mathcal{F}^{\ell, m}$  such that $\max_{x\in L}\|H(x)-\tilde{H}(x)\|< \varepsilon$.
Equivalently, $\mathcal{F}^{\ell,m}$ is any collection of continuous functions $\R^\ell\to \R^m$ that is dense in the space of continuous functions $\R^\ell\to \R^m$ with the compact-open topology \citep[Sec.~2.4]{hirsch1994differential} discussed in Appendix~\ref{app:review} (\S \ref{subapp:diff-top}).
We now restate and prove Theorem~\ref{th:pac}.

\ThmPAC*

\begin{proof}
Fix a finite set $S\subseteq K$ and $\delta > 0$.
Lemma~\ref{lem:eps-zero} implies the existence of smooth functions $\tilde{F}\colon \R^n\to \R^k$, $\tilde{G}\colon \R^k\to \R^n$ and a closed set $K_0\subseteq K$ disjoint from $S$ such that $\mu(K_0)< \delta$, $\partial \mu(K_0\cap \partial K) < \delta$, $M\setminus K_0$ is connected for each component $M$ of $K$, and  $\tilde{G}\circ \tilde{F}|_{K\setminus K_0}=\id_{K\setminus K_0}.$

Fix $\varepsilon > 0$. 
Since $K$ is compact, and by the density of $\mathcal{F}^{n,k}$, $\mathcal{F}^{k,n}$ and continuity of the composition map $(G,F)\mapsto G\circ F$ in the compact-open topologies \citep[p.~64, Ex.~10(a)]{hirsch1994differential}, there exist $F\in \mathcal{F}^{n,k}$, $G\in \mathcal{F}^{k,n}$ such that $G\circ F$ is uniformly $\varepsilon$-close to $\tilde{G}\circ \tilde{F}$ on $K$.
Since $\tilde{G}(\tilde{F}(x))=x$ for all $x\in K\setminus K_0$, the functions $F$, $G$ satisfy \eqref{eq:main-sup-eps}.
This completes the proof.
\end{proof}

\begin{Rem}\label{rem:change-of-measure}
The intrinsic measures $\mu$, $\partial \mu$ are a convenient choice for the statement of Theorem~\ref{th:pac}, but Theorem~\ref{th:pac} still holds verbatim if $\mu$, $\partial \mu$ are replaced by any finite Borel measures $\nu$, $\partial \nu$ that are absolutely continuous with respect to $\mu$, $\partial \mu$, respectively.
This is because such measures have the property that for each $\delta_1 > 0$ there is $\delta_2 > 0$ such that $\nu(A), \partial \nu(B) < \delta_1$ whenever $\mu(A), \partial \mu(B) < \delta_2$ \citep[Thm~3.5]{folland1999real}.
\end{Rem}

\begin{Rem}\label{rem:l2-linf-error}
Many practical algorithms for autoencoders, such as the one used to compute the example in \S \ref{sec:numerical}, attempt to minimize a least-squares loss, in contrast to the supremum norm loss that Theorem~\ref{th:pac} guarantees. In a private communication, Dr.\ Joshua Batson pointed out to us that, as a corollary of Theorem~\ref{th:pac}, one can also guarantee a global $L^2$ loss. We next develop the argument sketched by Dr. Batson.

Theorem~\ref{th:pac} implies that, for any finite Borel measures $\nu$ and $\partial \nu$ that are absolutely continuous with respect to $\mu$ and $\partial \mu$, respectively, the $L^2(\nu)$ and $L^2(\partial \nu)$ losses $$\int_K \|G(F(x))-x\|^2 \,d\nu(x) \quad \textnormal{and} \quad \int_{\partial K} \|G(F(x))-x\|^2 \,d\partial \nu(x)$$ can be made arbitrarily small.
To see this, first note that $G$ can be modified off of $F(K\setminus K_0)$ so that the modified $G$ maps $\R^k$ into the convex hull of $\{x\in \R^n\colon \dist{x}{K} < 2\varepsilon\}$, and the diameter of this convex hull is smaller than the diameter of $K$ plus $4\varepsilon$.
Thus, the $L^\infty$ loss 
\begin{equation}\label{eq:linf-loss}
\max_{x\in K}\|G(F(x))-x\| < \textnormal{diam } K + 4\varepsilon
\end{equation} 
is smaller than $\textnormal{diam }K + 4 \varepsilon$. 
This and \eqref{eq:main-sup-eps} imply the pair of inequalities
\begin{align*}
\int_K \|G(F(x))-x\|^2\, d\nu(x)&< (\textnormal{diam } K + 4\varepsilon)^2\nu(K_0)+\varepsilon^2 \nu(K),\\
\int_{\partial K} \|G(F(x))-x\|^2\, d\partial \nu(x)&< (\textnormal{diam } K + 4\varepsilon)^2\partial\nu(K_0\cap \partial K)+\varepsilon^2 \partial \nu(\partial K).
\end{align*}
Since both right sides $\to 0$ as $\delta, \varepsilon \to 0$ by the same measure theory fact in Remark~\ref{rem:change-of-measure} \citep[Thm~3.5]{folland1999real},
this establishes the claim.
The claim seems interesting in part because the loss $\frac{1}{N}\sum_{i=1}^N\|G(F(x_i))-x_i\|^2$ typically used to train autoencoders converges to the $L^2(\nu)$ loss as $N\to \infty$ with probability $1$ under certain assumptions on the data $x_1,\ldots, x_N \in K$. 
Namely, convergence occurs if the data are drawn from a Borel probability measure $\nu$ and satisfy a strong law of large numbers, which occurs under fairly general assumptions on the data (they need not be independent) \citep[Thm~X.2.1]{doob1990stochastic}, \citep[p.~1466]{andrews1987lln}, \citep[Thm~1, Thm~2]{potscher1989lln}.
However, Theorem~\ref{th:no-go-intro} in \S \ref{sec:optimality}  implies that the $L^\infty$ loss \eqref{eq:linf-loss} \emph{cannot} be made arbitrarily small in general.
\end{Rem}

\section{Numerical illustration}\label{sec:numerical}

We next illustrate the results through the numerical learning of a deep neural network autoencoder. In our example, inputs and outputs of the network are three-dimensional, and the set $K$ is taken to be the union of two smoothly embedded submanifolds of $\R^3$. The first manifold is a unit circle centered at $x=y=0$ and lying in the plane $z=0$. The second manifold  is a unit circle centered at $x=1$, $z=0$ and contained in the plane $y=0$. See Figure~\ref{fig:original} (left).
\begin{figure}[!htb]
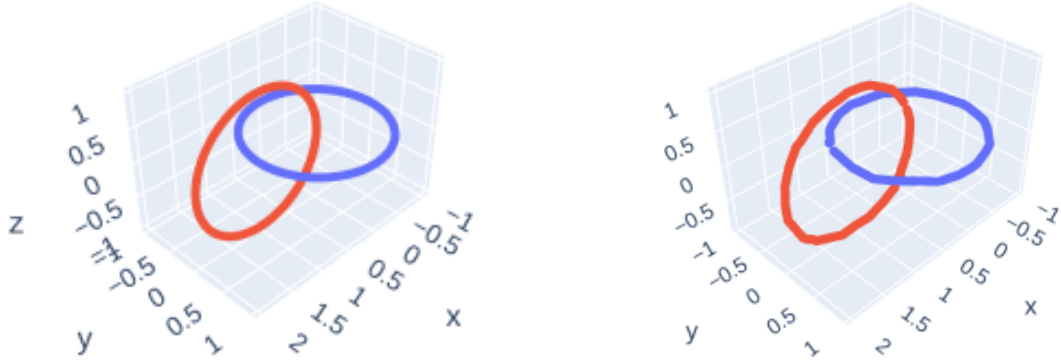

\minp{0.5}{\picc{3.44}{original_crop.png}}%
\minp{0.5}{\picc{2.8}{decoded1_crop.png}}
\caption{\emph{Left:} Two interlaced unit circles, one centered at $x=y=0$ in the plane $z=0$ (blue), and another centered at $x=1$, $z=0$ in the plane $y=0$ (red).
The circles are parameterized as 
$x(\theta) = 
(\cos(\theta),\sin(\theta),0)$
and
$x(\theta) = 
(1+\cos(\theta),0,\sin(\theta))$
respectively, with
$\theta\in[0,2\pi]$.
\emph{Right:}
The output of the autoencoder for the two interlaced unit circles, one centered at $x=y=0$ in the plane $z=0$ (blue), and another centered at $x=1$, $z=0$ in the plane $y=0$ (red). The network learning algorithm automatically picked the points at which the circles should be ``opened up'' to avoid the topological obstruction.
}
\label{fig:original}
\end{figure}

The choice of suitable neural net architecture ``hyperparameters'' (number of layers, number of units in each layer, activation function) is a bit of an art, since in theory just single-hidden layer architectures (with enough ``hidden units'' or ``neurons'') can approximate arbitrary continuous functions on compacts. After some experimentation, we settled on an architecture with three hidden layers of encoding with 128 units each, and similarly for the decoding layers. The activation functions are ReLU (Rectified Linear Unit) functions, except for the bottleneck and output layers, where we pick simply linear functions. Graphically this is shown in Figure~\ref{fig:architecture}. An appendix lists the Python code used for the implementation.
We generated 500 points in each of the circles,
and used 5000 epochs with a batch size of 20.
We used Python's TensorFlow with Adaptive Moment Estimation (Adam) optimizer and a mean squared error loss function.
\begin{figure}[!htb]
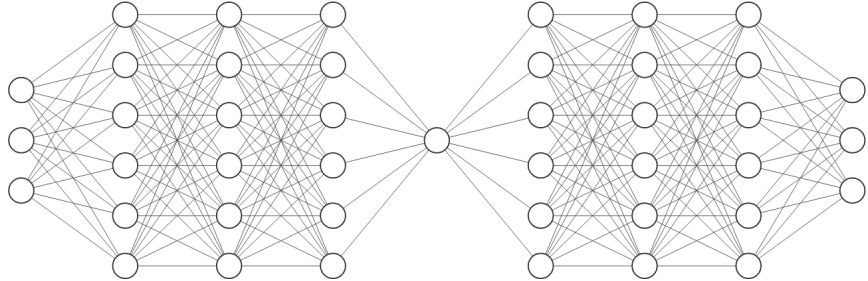

\picc{0.25}{3-6-6-6-1-6-6-6-3_network.png}
\caption{The architecture used in the computational example. For clarity in the illustration, only 6 units are depicted in each layer of the encoder and decoder, but the number used was 128.}
\label{fig:architecture}
\end{figure}
The resulting decoded vectors are shown in Figure~\ref{fig:original}(right).
Observe how the circles have been broken to make possible their embedding into $\R^1$.

The errors $\norm{G(F(x))-x}$ on the two circles are plotted in 
Figure~\ref{fig:errors}.
Observe that this error is relatively small except in two small regions.

\begin{figure}[!htb]
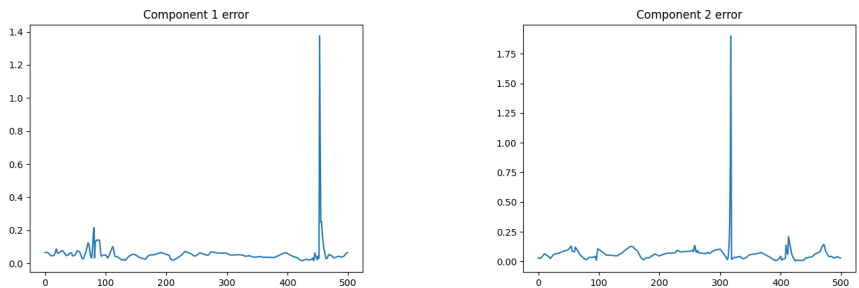

\minp{0.5}{\picc{0.35}{error1_crop.png}}%
\minp{0.5}{\pic{0.35}{error2_crop.png}}
\caption{The errors $\norm{G(F(x))-x}$ on the two cirles. The $x$-axis shows the index $k$ representing the $k$th point in the respective circle, where $\theta = 2\pi k/1000$.}
\label{fig:errors}
\end{figure}

In Figure~\ref{fig:bottleneck} we show the image of the encoder layer mapping as a subset of $\R^1$ as well as the encoding map $F$.
\begin{figure}[!htb]
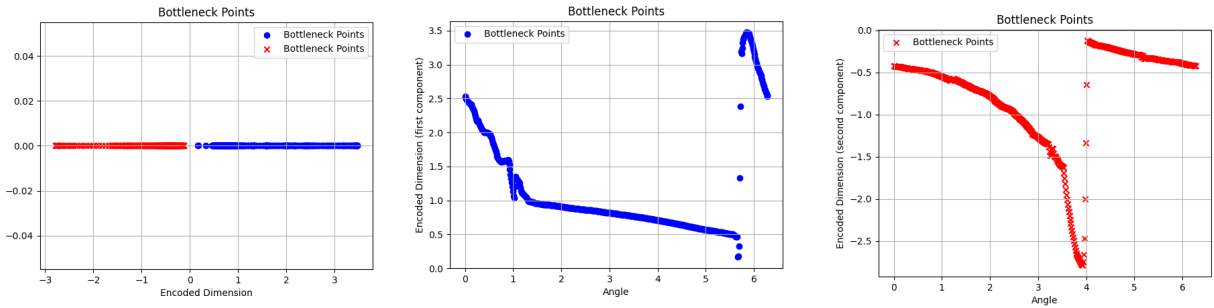

\minp{0.33}{\pic{0.35}{bottleneck_crop.png}}%
\minp{0.33}{\pic{0.35}{encoding1_crop.png}}%
\minp{0.33}{\pic{0.35}{encoding2_crop.png}}

\caption{Left: The bottleneck layer, showing the images of the blue and red circles.
Middle and Right: The encoding maps for the two circles. The $x$-axis is the angle $\theta$ in a $2\pi$ parametrization of the unit circles. The $y$-axis is the coordinate in the one-dimensional bottleneck layer.
}
\label{fig:bottleneck}
\end{figure}




It is important to observe that most neural net learning algorithms, including the one that we employed, are stochastic, and different executions might give different results or simply not converge. As an illustration of how results may differ, see Figures
\ref{fig:decoded1_take2},
\ref{fig:errors_take2},
and
\ref{fig:bottleneck_take2}.
%
\begin{figure}[!htb]
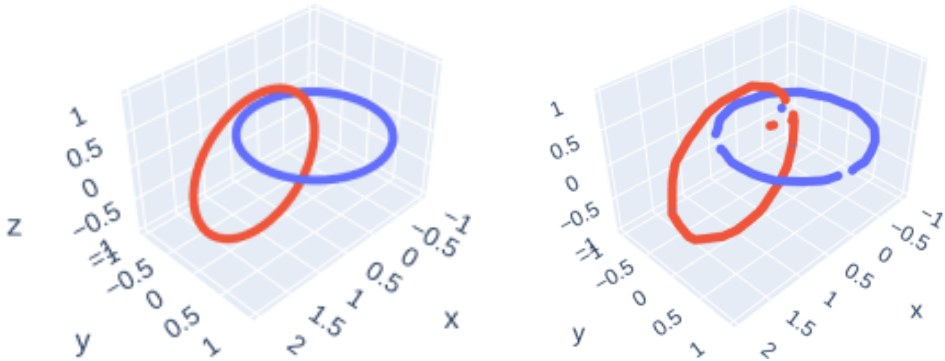

\minp{0.5}{\picc{3.44}{original_crop_take2.png}}%
\minp{0.5}{\pic{2.8}{decoded1_crop_take2.png}}
\caption{\emph{Left:} Showing again the two interlaced unit circles.
\emph{Right:}
For a different run of the algorithm, shown is the output of the autoencoder.
}
\label{fig:decoded1_take2}
\end{figure}
\begin{figure}[!htb]
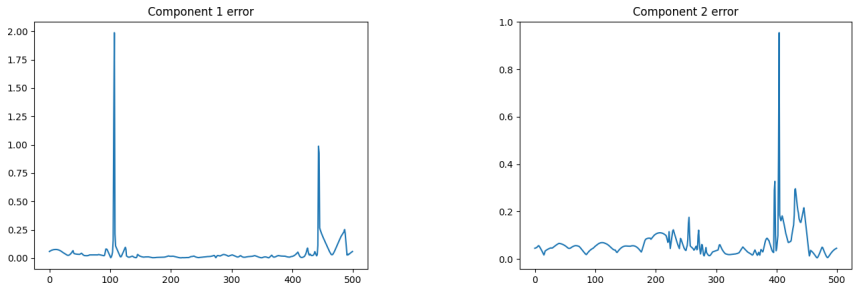

\minp{0.5}{\picc{0.35}{error1_crop_take2.png}}%
\minp{0.5}{\pic{0.35}{error2_crop_take2.png}}
\caption{Result from another run of algorithm. The errors $\norm{G(F(x))-x}$ on the two circles. The $x$-axis shows the index $k$ representing the $k$th point in the respective circle, where $\theta = 2\pi k/500$.}
\label{fig:errors_take2}
\end{figure}
\begin{figure}[!htb]
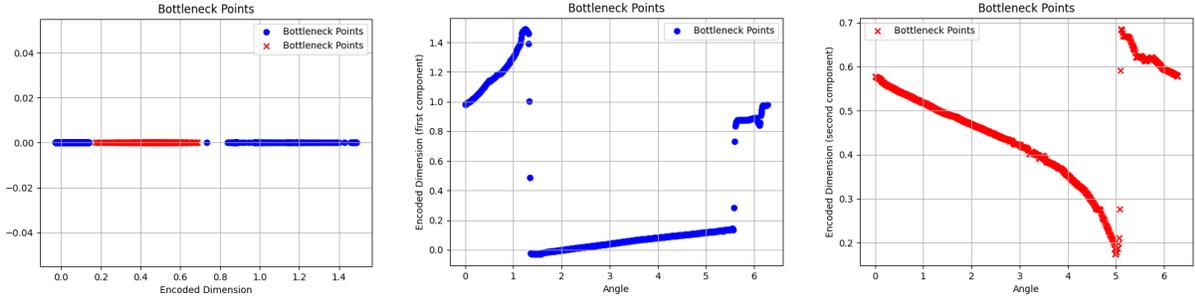

\minp{0.33}{\pic{0.35}{bottleneck_crop_take2.png}}%
\minp{0.33}{\pic{0.35}{encoding1_crop_take2.png}}%
\minp{0.33}{\pic{0.35}{encoding2_crop_take2.png}}
\caption{Result from another run of algorithm. Left: The bottleneck layer, showing the images of the blue and red circles.
Middle and Right: The encoding maps for the two circles. The $x$-axis is the angle $\theta$ in a $2\pi$ parametrization of the unit circles. The $y$-axis is the coordinate in the one-dimensional bottleneck layer.
}
\label{fig:bottleneck_take2}
\end{figure}

%


\section{Theorem~\ref{th:pac} cannot be made global}
\label{sec:optimality}
Theorem~\ref{th:pac} asserts that arbitrarily accurate autoencoding is always possible on the complement of a closed subset $K_0 \subseteq K$ having arbitrarily small positive intrinsic measure.
This leads one to ask whether that result can be improved by imposing further ``smallness'' conditions on $K_0$.
For example, rather than small positive measure, can one require that $K_0$ has measure zero?
Alternatively, can one require that $K_0$ is small in the Baire sense, i.e., meager (\S \ref{subapp:diff-top})?
In either case, the complement $K\setminus K_0$ of $K_0$ in $K$ would be dense, so the ability to arbitrarily accurately autoencode $K\setminus K_0$ as in Theorem~\ref{th:pac} would imply the same for all of $K$.
This is because continuity implies that the inequality \eqref{eq:main-sup-eps} also holds with $K\setminus K_0$ replaced by its closure $\cl(K\setminus K_0)$, and $\cl(K\setminus K_0) = K$ if $K\setminus K_0$ is dense in $K$.

The following Theorem~\ref{th:no-go-intro} eliminates the possibility of such extensions by showing that, for a broad class of $K$, the maximal autoencoder error on $K$ is bounded below by the \concept{reach} $\rch\geq 0$ of $K$, a constant depending only on $K$.
Here $\rch$ is defined to be the largest number such that any $x\in \R^n$ satisfying $\dist{x}{K} < \rch$ has a unique nearest point on $K$ 
\citep{federer1959curvature,aamari2019estimating,berenfeld2022estimating,fefferman2016testing,pmlr-v75-fefferman18a}.
Figure~\ref{fig:reach} illustrates this concept.
\begin{figure}[!htb]
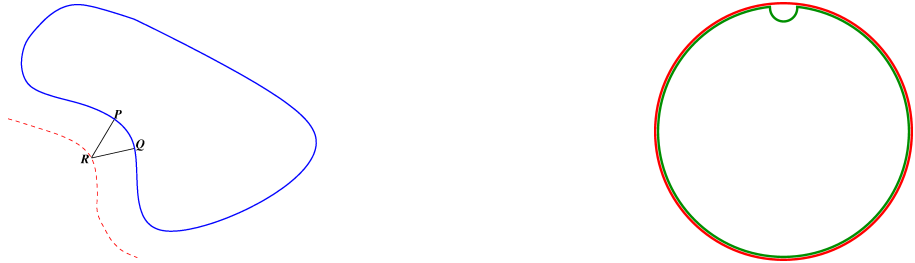

\minp{0.5}{\picc{0.4}{reach_closed_manifold_rev.png}}%
\minp{0.5}{\picc{0.18}{dimpled_circle.png}}
\caption{Left: Illustration of reach. A one-dimensional submanifold $K$ of $\R^2$ is shown in blue. Two segments are drawn normal to $K$, starting at points $P$ and $Q$ in a non-convex high-curvature region. These segments intersect at a point $R$ and have length $\rch$. If perturbations of $P$ and $Q$ lead to $R$, then there is no way to recover $P$ and $Q$ unambiguously as the unique point nearest to $R$.
The dotted line represents points at distance $\rch$ from $K$.
Right: Illustration of ``dewrinkled'' reach: here $K$ (in green) is a ``dimpled circle'' of radius 1, with a
``dimple'' which is a semicircle of radius $\varepsilon \approx0$, and $L$ is the ``ironed
circle'' of radius 1 in which the wrinkle has been removed. The mapping
$T:L\rightarrow K$ is the obvious projection. In this example,
$\rch=\varepsilon \approx 0$ but $\drch = 1-\varepsilon  \approx 1$.}
\label{fig:reach}
\end{figure}

\begin{Rem}
The example $K\coloneqq \{0\}\cup\{1/n\colon n\in \N\}\subseteq \R$ shows that a compact subset of a Euclidean space need not have a positive reach $\rch \geq 0$.
However, $\rch > 0$ if $K$ is a compact smoothly embedded submanifold (cf. \eqref{eq:reach-ineq} below).
\end{Rem}

\begin{Th}\label{th:no-go-intro}
Let $k,n\in \N$ and $K \subseteq \R^n$ be a $k$-dimensional compact smoothly embedded  submanifold.
For any continuous functions $F\colon \R^n\to \R^k$ and $G\colon\R^k\to \R^n$,
\begin{equation}\label{eq:reach-ineq}
\max_{x\in K} \|G(F(x))-x\| \geq \rch > 0.
\end{equation}
\end{Th}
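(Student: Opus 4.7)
The plan is a proof by contradiction using a homotopy-theoretic obstruction, in the spirit of the topological obstructions mentioned in the introduction but now quantified by the reach. First I would recall two standard facts about the reach: since $K$ is a compact smoothly embedded submanifold without boundary, Federer's classical tubular neighborhood theorem gives $\rch > 0$, and the nearest-point projection $\pi\colon U\to K$ is well-defined and continuous on the open tubular neighborhood $U\coloneqq\{y\in\R^n : \dist{y}{K} < \rch\}$, with $\pi|_K=\id_K$.

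Next I would assume for contradiction that $s\coloneqq \sup_{x\in K}\|G(F(x))-x\| < \rch$ and construct the retract-like map $r\coloneqq \pi\circ G\circ F|_K\colon K\to K$. I would then verify that the straight-line interpolation $H(t,x)\coloneqq (1-t)x + t\,G(F(x))$ satisfies $\dist{H(t,x)}{K}\le \|H(t,x)-x\| \le s < \rch$ for all $(t,x)\in[0,1]\times K$, so $H$ takes values in $U$ and $\pi\circ H$ is a well-defined continuous homotopy in $K$ from $\pi|_K = \id_K$ to $r$. In particular $r$ is homotopic to $\id_K$.

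The contradiction comes from the observation that $r$ factors through the contractible space $\R^k$ as $K\xrightarrow{F|_K}\R^k\xrightarrow{\pi\circ G}K$, so $r$ is nullhomotopic; hence $\id_K$ is nullhomotopic and $K$ is contractible. But $K$ is a nonempty closed smooth manifold of dimension $k\ge 1$, so it cannot be contractible: if $K$ is disconnected it is not path-connected, while if $K$ is connected then Poincar\'e duality with $\Z/2$ coefficients (which sidesteps orientability) gives $H_k(K;\Z/2)\cong \Z/2 \ne 0 = H_k(\text{pt};\Z/2)$. Either way we reach a contradiction, which forces $\sup_{x\in K}\|G(F(x))-x\|\ge \rch$.

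The main obstacle I anticipate is bookkeeping around the tubular neighborhood: one must ensure the straight-line homotopy from $x$ to $G(F(x))$ stays inside $U$ so that $\pi\circ H$ makes sense, and one must know that $\pi$ is continuous and restricts to the identity on $K$. Both are handled by standard properties of the reach together with the strict inequality $s<\rch$, so this is a matter of citing the right source rather than a genuine difficulty. Once that is in place, the factorization-through-$\R^k$ step is immediate and the algebraic-topological obstruction finishes the argument.
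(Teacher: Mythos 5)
Your setup — positive reach via Federer, the nearest-point retraction $\pi$, the straight-line homotopy staying inside the tube, and hence $\pi\circ G\circ F|_K\simeq \id_K$ — exactly mirrors the paper's argument. The gap is in the next step: you assert that $r\coloneqq\pi\circ G\circ F|_K$ is nullhomotopic because it ``factors through the contractible space $\R^k$ as $K\xrightarrow{F|_K}\R^k\xrightarrow{\pi\circ G}K$.'' But $\pi\circ G$ is not defined on all of $\R^k$: $\pi$ is only defined on the tube $U$, and you only know $G(F(K))\subset U$, not $G(\R^k)\subset U$. What you actually have is a factorization $K\to F(K)\to K$ through the \emph{compact subset} $F(K)\subset\R^k$, and a compact subset of $\R^k$ need not be contractible (e.g.\ it could be a $(k-1)$-sphere when $k\ge 2$), so the nullhomotopy does not follow. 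Trying to contract $F|_K$ to a constant inside $\R^k$ and transport via $\pi\circ G$ fails for the same reason: the straight-line homotopy in $\R^k$ may exit $G^{-1}(U)$. Only in the special case $k=1$ does your argument go through, because then $F(K)$ is a compact subset of $\R$, hence a disjoint union of points and intervals, hence has contractible components.

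The paper closes this gap by switching from homotopy to \v{C}ech cohomology. The induced map on $\Hc^k$ factors through $\Hc^k(F(K);\Gamma)$, and the decisive input is the nontrivial theorem that $\Hc^k(C;\Gamma)=0$ for \emph{every} compact subset $C\subset\R^k$ (a consequence of Alexander duality / dimension theory, cited as \cite[Cor.~VI.8.5]{bredon1997topology}). That vanishing is what makes the factorization argument work without any contractibility of $F(K)$ and without needing $\pi\circ G$ to extend to all of $\R^k$. Your contradiction at the end (nontriviality of top homology/cohomology of a closed $k$-manifold, with $\Z/2$ coefficients to handle nonorientability) is fine and essentially the same obstruction the paper uses, but you need to reach it via the cohomological vanishing above rather than a contractibility-of-$\R^k$ argument.
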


\begin{Rem}
The ability to make $K_0$ small in Theorem~\ref{th:pac} relies on an autoencoder's ability to produce functions $G\circ F$ that change rapidly over small regions.
E.g., if $G\circ F$ is Lipschitz then Theorem~\ref{th:no-go-intro} implies a lower bound on the size of $K_0$ in terms of $\rch$ and the Lipschitz constant.    
\end{Rem}

To prove Theorem~\ref{th:no-go-intro} we instead prove the following more general Theorem~\ref{th:no-go-proofs}, because the proof is the same.
Here $H_k(S;\Z_2)$ denotes the $k$-th singular homology of a topological space $S$ with coefficients in the abelian group $\Z_2\coloneqq \Z/2\Z$ \citep[p.~153]{hatcher2002algebraic}.
Upon taking $L=\R^{k}$ for the latent space, the statement implies Theorem~\ref{th:no-go-intro} since $H_k(K;\Z_2)=\Z_2\neq 0$ when $K$ is a compact manifold \citep[p.~236]{hatcher2002algebraic}.
Recall that $\rch$ denotes the reach of $K\subseteq \R^n$.
See Appendix~\ref{app:review} (\S \ref{subapp:alg-top}) for discussion of the topological concepts and results used in the following proof.

\begin{Th}\label{th:no-go-proofs}
Let $k,n\in \N$, $K \subseteq \R^n$ be a compact subset, and $L$ be a noncompact manifold of dimension less than or equal to $k$.  
If $H_k(K; \Z_2)\neq 0$, then for any continuous maps $F\colon K\to L$ and $G\colon L\to \R^n$,
\begin{equation}\label{eq:reach-ineq-proof}
\max_{x\in K} \|G(F(x))-x\| \geq \rch.
\end{equation}
\end{Th}

\begin{proof}
Let $K\subseteq \R^n$ be a compact subset and $L$ be a noncompact manifold of dimension at most $k$.
Since \eqref{eq:reach-ineq-proof} holds automatically if $\rch = 0$, assume  $\rch > 0$. 
We prove the contrapositive statement that failure of \eqref{eq:reach-ineq-proof} for some $F$, $G$ implies that $H_k(K;\Z_2)=0$. 
Thus, assume there are continuous maps $F$, $G$ such that $$\max_{x\in K}\|G(F(x))-x\| < \rch.$$ This implies that $$G(F(K))\subseteq N_{\rch}(K)\coloneqq \{x\in \R^n\colon \dist{x}{K}< \rch\}.$$
Since for each $x\in N_{\rch}(K)$ the optimization problem $\min_{y\in K}\dist{x}{y}$  has a unique minimizer $y_* = \rho(x)$, $\rho\colon N_{\rch}(K)\to K$ is a continuous retraction ($\rho|_K=\id_K$).
The line segment from $x\in K$ to $G(F(x))$ is contained in $N_{\rch}(K)$, since for $t\in [0,1]$
\[
\dist{tG(F(x))+(1-t)x}{K}
\;\leq\; \|tG(F(x))+(1-t)x -x\|
\;\leq\;  \|G(F(x)) - x\|
\;<\; \rch.
\]
Thus,
$$(t,x)\mapsto \rho\left(tG(F(x)) + (1-t)x\right)$$
defines a homotopy $[0,1]\times K \to K$ from $\id_K$ to $(\rho\circ G\circ F)|_K\colon K\to K$.
Defining the open set $U\subseteq \R^k$ containing $F(K)$ to be the preimage $U\coloneqq G^{-1}(N_{\rch}(K))$, homotopy invariance \citep[Thm~2.10, p.~153]{hatcher2002algebraic} implies that the induced homomorphism \citep[p.~111]{hatcher2002algebraic} $$(\rho\circ G\circ F|_K)_* = \rho_*\circ (G|_{U})_*\circ F_*\colon H_k(K;\Z_2)\to H_k(K;\Z_2)$$ is equal to the identity homomorphism $(\id_K)_*$ induced by $\id_K$.
On the other hand, the homomorphism $$(G|_{U})_*\colon H_k(U;\Z_2)\to H_k(N_{\rch}(K);\Z_2)$$ is zero, since $U$ is a noncompact manifold of dimension $\leq k$, and $H_k(U;\Z_2) = 0$ for any such $U$ \citep[Prop.~3.29, Prop.~2.6] {hatcher2002algebraic}.
Thus, $H_k(K;\Z_2)=0$.
This completes the proof by contrapositive.
\end{proof}


The reach is a globally defined parameter, and thus our lower bound on approximation error may underestimate the minimal possible error. In a private communication, Dr. Joshua Batson suggested that the authors consider an example such as the one shown in Figure~\ref{fig:reach}(right) and attempt to prove a better lower bound for such an example, which led us to improve the necessary statement as follows.

For any two compact subsets $K$ and $L$ of $\R^n$, we denote by ${\mathcal
  C}(K,L)$ the set of continuous mappings $T:L\rightarrow K$, and define the maximum
deviation of $T\in {\mathcal C}(K,L)$ from the identity as: 
\[
\DT \,:=\; \max_{y\in L}\norm{T(y)-y} \,.
\]
We denote by $\MM$ the set of all compact smoothly embedded
$k$-dimensional submanifolds $L$ of $\R^n$.
For any compact subset $K\subseteq \R^n$, and any $k\in\N$, we define the
$k$-dimensional \concept{dewrinkled reach} as
\[
\drch \,:=\; \sup_{L\in \MM,\,T\in {\mathcal C}(K,L)} \, \{r_L - \DT\}\,.
\]
When $K\in \MM$, we have that $\drch\geq \rch$ (use $L=K$ and $T=$ identity).
However, $\drch$ may be much larger than $\rch$ (see Figure~\ref{fig:reach}(right)).


\begin{Co}\label{cor:KL-version-theorem2}
Let $k,n\in \N$ and $K\subseteq \R^n$ a compact subset.
For any continuous functions $F\colon \R^n\to \R^k$ and $G\colon\R^k\to \R^n$,
\begin{equation}
\max_{x\in K} \norm{G(F(x))-x} \geq \drch \,.
\end{equation}
\end{Co}
\begin{proof}
Pick 
$L\in \MM$ and 
$T\in {\mathcal C}(K,L)$, and consider the composition 
$\widetilde F := F\circ T: 
L \rightarrow  \R^k : y \mapsto  F(T(y))
$.
Applying Theorem~\ref{th:no-go-proofs} to $L$ and the maps
$\widetilde F\colon L \to \R^k$ and $G\colon\R^k\to \R^n$, we may pick a
$\eta \in L$ so that $\norm{\eta  - G(\widetilde F(\eta ))} \ge r_L$.
Let $\xi :=T(\eta )$, so $G(\widetilde F(\eta )) = G(F(T(\eta )) = G(F(\xi ))$.
Then
\[
\eta  - G(\widetilde F(\eta )) = \eta  - G(F(\xi )) =  (\eta  - T(\eta )) + (\xi  - G(F(\xi )))
\]
so
\[
r_L \leq  \norm{\eta  - G(\widetilde F(\eta ))} \leq  \norm{\eta -T(\eta )} + \norm{\xi -G(F(\xi ))} \leq  \DT + \norm{\xi -G(F(\xi ))}
\]
and hence
\[
\max_{x\in K} \norm{G(F(x))-x} \geq  \norm{\xi -G(F(\xi ))} \geq  r_L - \DT \,.
\]
This is valid for all $(L,T)$, and thus
$\max_{x\in K} \norm{G(F(x))-x} \geq  \drch$, as claimed.
\end{proof}

\begin{Rem}
All the results in this section were stated for manifolds, meaning (recall our convention) manifolds with empty boundary. Clearly, the same results cannot be valid for manifolds with non-empty boundary. For example, the submanifold with boundary of $\R^2$ consisting of a one-dimensional segment in the $x$-axis has infinite reach yet can be perfectly reconstructed (project on$x$-axis and then include in $\R^2$).
\end{Rem}

\section{Discussion}\label{sec:conclusion}

Our main representation result is Theorem~\ref{th:pac}. This theorem theoretically insures that data points lying in a submanifold $K$ (or even in a finite union of submanifolds) of a given dimension $k$ can be encoded through a bottleneck layer of the same dimension $k$, up to an arbitrarily small uniform reconstruction error $\varepsilon$. Moreover, the generalization error will also be uniformly smaller than $\varepsilon$, with arbitrarily high probability $1-\delta$, when points are randomly sampled from $K$.  Our main necessity result is Theorem~\ref{th:no-go-intro}. This theorem complements the representability result by providing a lower bound for global uniform reconstruction.  On the other hand, as discussed in Remark ~\ref{rem:l2-linf-error}, one can guarantee a global reconstruction with error less than $\varepsilon$ in a mean least squares sense.

There is a vast amount of experimental work using autoencoders for dimension reduction, but comparatively few papers focus on a theoretical basis for such reductions. One theoretical result is given (with no proof) in~\citet{hecht-nielsen1995replicator}, in which a theorem is stated for replicator neural networks (with quantized middle hidden layer activations approximating the function $\theta(r)=0$ for $r<0$, $\theta(r)=1$ for $r>1$ and $\theta(r)=r$ for $r\in[0,1]$). Using our notation, the theorem claims roughly that if data belongs to a set $K$ which is the image of a smooth embedding of a $k$-dimensional unit cube, and a probability measure is given on $K$, then, in the limit of high dimensions ($k\rightarrow\infty$) and a large number of quantization levels, replicator networks trained to compute optimal encodings will recover the natural (entropy) coordinates in the data manifold. Our Theorem~\ref{th:pac}, in contrast, studies representations of data lying in rather arbitrary manifolds (and would indeed be quite trivial if $K$ was already assumed to be diffeomorphic to a cube), and is valid for arbitrary $k$, not merely asymptotically.

Regarding the limitations of autoencoders as reflected in our lower bounds for global reconstruction, the authors of~\citet{Batson2021}, in the context of anomaly detection in high-energy physics, argue that autoencoders might miss or falsely detect anomalies due to the topological shape of the phase space. Our necessity result Theorem~\ref{th:no-go-intro}
serves to quantify these obstructions. 

Theorem~\ref{th:pac} provides an existence result. As is often the case with results regarding the expressive power of neural networks, effective learning during training involves overcoming numerous challenges. This is because the landscape of the loss function (whether $L^2$ or any other criterion) is typically highly non-convex and irregular, presenting spurious local minima, plateaus, and potentially steep ravines, leading gradient-based optimization methods to converge to local minima or navigate through saddle points inefficiently, thus failing to find a low-error autoencoder.  Moreover, the choice of optimizer and network architecture and hyperparameters will affect the success of numerical methods. Finally, for sparse training samples from $K$ there is little hope of effective generalization to the full manifold $K$ in the absence of  proper regularization of the loss function.

There are many possible directions in which we will be expanding our study. One of them is the extension to model reduction for time series data, for which there are many existing approaches including for example dynamic mode decomposition~\citep{kutz_dmd_siam_book} and deep learning dynamic mode decomposition~\citep{DLEDMD}. Specifically, one may assume that a vector field, or an iteration in discrete-time, exists on the data manifold $K$. The objective then becomes that of defining a dynamics in the bottleneck layer that intertwines with the original dynamics in $K$, thus providing a reduced-order representation of the original dynamics, in the spirit of the computational approach in~\citet{Baig2023}. Further along this direction, one may consider control systems (thought of as families of vector fields), and the reduction to lower-dimensional control problems in the same fashion.

A related direction of study concerns representation of dynamics through the ``Koopman'' approach, in which the middle-layer dynamics are linearized. Theoretical results characterizing the limitations as well as possibilities of Koopman embeddings are given in~\citet{2023_ifac_koopman,kvalheim2023linearizability}. In this context, the middle dimension is often larger than the input dimension, rather than smaller, but on the other hand linearity imposes a different type of simplification. Autoencoder realizations of Koopman embeddings have been suggested in the literature, see for instance~\citet{otto_rawley_autoencoders_koopman,2020azencot_autoencoders_sequential}.  We will extend the theory to establish when Koopman autoencoders exist, and their limitations.
In parallel or in combination with these dynamics ideas, if our manifold $K$ comes endowed with a particular probability measure, we may ask to represent this measure through the bottleneck, as a distribution on latent variables, which is a topic closely related to variational autoencoders.

Yet another direction of research is that of understanding to what extent latent representations can mirror, or not, global topological, metric, and combinatorial features of data manifolds, adapting and extending the recent work~\citet{2022_shu_2d} that dealt with the unavoidable distortions that arise from low-dimensional representations, especially in the context of systems biology single cell data.

Finally, another direction of study concerns the generalization of our representation result Theorem~\ref{th:pac} from unions $K$ of submanifolds with boundary to unions of more general stratified sets \citep[Def.~1.11]{trotman2020stratification}.
A manifold with boundary is an example of a stratified set with two strata, namely, the codimension-$0$ interior and codimension-$1$ boundary.
Most of the conclusions of Theorem~\ref{th:pac} are ``stratified'' in the sense that the conclusion $\partial \mu(K_0\cap \partial K) < \delta$ for the codimension-1 stratum is the analog of the conclusion $\mu(K_0)< \delta$ for the codimension-0 stratum, and the conclusion \eqref{eq:main-sup-eps} directly implies the analogous conclusion with $K$ replaced by $\partial K$.
However, Theorem~\ref{th:pac} contains no statement on connectedness of $(\partial M)\setminus K_0$ analogous to the conclusion of
Theorem~\ref{th:pac} that $M\setminus K_0$ is connected for each component $M$ of $K$.
It seems interesting to know whether this analogous statement generally holds, and moreover whether Theorem~\ref{th:pac} generalizes to a useful class of stratified sets in a ``fully stratified'' way.
For example, a suitable generalization of Theorem~\ref{th:pac} to Whitney stratified sets \citep[Def.~1.2.3]{trotman2020stratification} would imply a representation theorem for autoencoding of algebraic varieties and more generally subanalytic sets, since these admit Whitney stratifications \citep[p.~5]{trotman2020stratification}. Algebraic varieties arise naturally as the sets of steady states of mass-action biological systems, and finding parametrizations of steady states is a key problem in fitting models to data. In the special case of varieties defined by toric ideals, global parametrizations are possible~\citep{receptorligandJTB04}, but in more general cases, particularly when analyzing single-cell data, equilibrium sets are only known numerically~\citep{wang_lin_sontag_sorger2019}, and autoencoders might provide a useful approach to the estimation of dimension.

\subsubsection*{Acknowledgments}
EDS's work was partially supported by grants ONR N00014-21-1-2431 and AFOSR FA9550-21-1-0289. 
The authors thank the reviewers for insightful comments, and especially thank Dr. Joshua Batson for his observations that led to Remark~\ref{rem:l2-linf-error} and to
Corollary~\ref{cor:KL-version-theorem2}.

\clearpage
\newpage
\bibliography{ref}
\bibliographystyle{tmlr}

\newpage
\appendix

\section{Appendix: Code used for implementation}\label{app:code}


{\small
\begin{verbatim}
howmany_points = 500
epochs = 5000
batch_size = 20
import matplotlib.pyplot as plt
import plotly.graph_objects as go
import pandas as pd
import numpy as np
import scipy as sp
import tensorflow as tf
from tensorflow.keras.layers import Input, Dense
from tensorflow.keras.models import Model

# Define the parametric equations for the circles
def circle_xy(t, h, k, r):
    x = h + r * np.cos(t)
    y = k + r * np.sin(t)
    z = 0 * np.ones_like(t)
    return x, y, z

def circle_yz(t, h, k, r):
    x = h + r * np.sin(t)
    y = 0 * np.ones_like(t)
    z = k + r * np.cos(t)
    return x, y, z

t = np.linspace(0, 2 * np.pi, howmany_points)

x1, y1, z1 = circle_xy(t, 0, 0, 1)
x2, y2, z2 = circle_yz(t, 1, 0, 1)

input_data = np.vstack((np.column_stack((x1, y1, z1)),\
np.column_stack((x2, y2, z2))))
# Build the autoencoder architecture with a bottleneck layer of dimension 1
input_data_test = np.vstack((np.column_stack((x1test, y1test, z1test)),\
np.column_stack((x2test, y2test, z2test))))

input_dim = 3

# Encoder model
input_layer = Input(shape=(input_dim,))
encoded = Dense(128, activation='relu')(input_layer)
encoded = Dense(128, activation='relu')(encoded)
encoded = Dense(128, activation='relu')(encoded)
encoded = Dense(1, activation='linear')(encoded)  # Bottleneck layer with dimension 1
encoder = Model(inputs=input_layer, outputs=encoded)

# Decoder model
decoded_input = Input(shape=(1,))
decoded = Dense(128, activation='relu')(decoded_input)
decoded = Dense(128, activation='relu')(decoded)
decoded = Dense(128, activation='relu')(decoded)
decoded = Dense(input_dim, activation='linear')(decoded)
decoder = Model(inputs=decoded_input, outputs=decoded)

# Autoencoder model
autoencoder = Model(inputs=input_layer, outputs=decoder(encoder(input_layer)))

autoencoder.compile(optimizer='adam', loss='mean_squared_error')

autoencoder.fit(input_data, input_data, epochs=epochs, \
batch_size=batch_size, shuffle=True)

# Test the autoencoder on the training data
encoded_vectors = encoder.predict(input_data)
decoded_vectors = decoder.predict(encoded_vectors)

decoded_vectors_1 = decoded_vectors[0:howmany_points,:]
decoded_vectors_2 = decoded_vectors[-howmany_points:,:]
encoded_vectors_1 = encoded_vectors[0:howmany_points,:]
encoded_vectors_2 = encoded_vectors[-howmany_points:,:]

# Create the 3D plot of data vectors in plotly
fig1 = go.Figure()
# Add circles to the plot
fig1.add_trace(go.Scatter3d(x=x1, y=y1, z=z1, mode='lines',\
name='unit circle centered at x=0, y=0 in the plane z=0', line=dict(width=8)))
fig1.add_trace(go.Scatter3d(x=x2, y=y2, z=z2, mode='lines',\
name='unit circle centered at x=1, z=0 in the plane y=0', line=dict(width=8)))
# Setting the axis labels
zoom = 2.5
fig1.update_layout(scene_camera=dict(eye=dict(x=zoom, y=zoom, z=zoom)))\
# zoom out so plot fits
fig1.show()
fig1.write_image(pathdrive+"original.png") #this works with plotly
fig1.write_image(pathdrive+"original.svg")

fig2 = go.Figure()
fig2.add_trace(go.Scatter3d(x=decoded_vectors_1[:, 0], y=decoded_vectors_1[:, 1], z=decoded_vectors_1[:,2],\
mode='markers', marker=dict(size=3), name='decoded unit circle centered at x=0, y=0 in the plane z=0'))
fig2.add_trace(go.Scatter3d(x=decoded_vectors_2[:, 0], y=decoded_vectors_2[:, 1], z=decoded_vectors_2[:,2],\
mode='markers', marker=dict(size=3), name='decoded unit circle centered at x=0, y=0 in the plane z=0'))
zoom = 2
fig2.update_layout(scene_camera=dict(eye=dict(x=zoom, y=zoom, z=zoom))) # zoom out so plot fits
fig2.show()
fig2.write_image(pathdrive+"decoded1.png") #this works with plotly
fig2.write_image(pathdrive+"decoded1.svg")

# Create again the 3D plot of data vectors in plotly but use a different view in 3 and 4 below:
fig3 = go.Figure()
# Add circles to the plot
fig3.add_trace(go.Scatter3d(x=x1, y=y1, z=z1, mode='lines',\
name='unit circle centered at x=0, y=0 in the plane z=0',\
line=dict(width=8)))
fig3.add_trace(go.Scatter3d(x=x2, y=y2, z=z2, mode='lines',\
name='unit circle centered at x=1, z=0 in the plane y=0', line=dict(width=8)))
# Setting the axis labels
fig3.update_layout(scene=dict(xaxis_title='X', yaxis_title='Y',\
zaxis_title='Z'))
# to convert spherical elev=30, azim=65 to cartesian, one uses
#     x = r * math.cos(elev_rad) * math.cos(azim_rad)
#     y = r * math.cos(elev_rad) * math.sin(azim_rad)
#     z = r * math.sin(elev_rad)
# so I get with r=1: x=0.366, y=0.785, z=0.5
zoom = 3
fig3.update_layout(scene_camera=dict(eye=dict(x=zoom*0.366, y=zoom*0.785,
z=zoom*0.5)))
# different view angle zoom out so plot fits
fig3.show()
fig3.write_image(pathdrive+"original2.png") #this works with plotly
fig3.write_image(pathdrive+"original2.svg")

fig4 = go.Figure()
fig4.add_trace(go.Scatter3d(x=decoded_vectors_1[:, 0], y=decoded_vectors_1[:,
1],\
z=decoded_vectors_1[:,2], mode='markers', marker=dict(size=3),\
name='decoded unit circle centered at x=0, y=0 in the plane z=0'))
fig4.add_trace(go.Scatter3d(x=decoded_vectors_2[:, 0], y=decoded_vectors_2[:,
1],\
z=decoded_vectors_2[:,2], mode='markers', marker=dict(size=3),\
name='decoded unit circle centered at x=0, y=0 in the plane z=0'))
fig4.update_layout(scene=dict(xaxis_title='X', yaxis_title='Y', zaxis_title='Z'))
zoom = 3
fig4.update_layout(scene_camera=dict(eye=dict(x=zoom*0.366, y=zoom*0.785,\
z=zoom*0.5))) # zoom out so plot fits
fig4.show()
fig4.write_image(pathdrive+"decoded2.png") #this works with plotly
fig4.write_image(pathdrive+"decoded2.svg")

# Plot the bottleneck points
plt.scatter(encoded_vectors_1, np.zeros_like(encoded_vectors_1),\
marker='o', label='Bottleneck Points', color='b')
plt.scatter(encoded_vectors_2, np.zeros_like(encoded_vectors_1),\
marker='x', label='Bottleneck Points', color='r')
plt.xlabel('Encoded Dimension')
plt.title('Bottleneck Points')
plt.legend()
plt.grid()
plt.savefig(pathdrive+"bottleneck.png") # this works with matplotlib but before show
plt.tight_layout()
plt.show()

# compute matrix norm along second "axis", i.e. along "y axis", i.e. each row
delta_1 = np.linalg.norm(input_data[0:howmany_points,:] - decoded_vectors_1, axis = 1)
delta_2 = np.linalg.norm(input_data[-howmany_points:,:] - decoded_vectors_2, axis = 1)

plt.plot(delta_1)
plt.title('Component 1 error')
plt.savefig(pathdrive+"error1.png")
# this works with matplotlib but before show
plt.show()

plt.plot(delta_2)
plt.title('Component 2 error')
plt.savefig(pathdrive+"error2.png")
plt.show()

# plot the encoded as a function of the angle parameter
plt.scatter(t, encoded_vectors_1, marker='o', label='Bottleneck Points', color='b')
plt.xlabel('Angle')
plt.ylabel('Encoded Dimension (first component)')
plt.title('Bottleneck Points')
plt.legend()
plt.grid()
plt.savefig(pathdrive+"encoding1.png")
plt.show()

plt.scatter(t, encoded_vectors_2, marker='x', label='Bottleneck Points', color='r')
plt.xlabel('Angle')
plt.ylabel('Encoded Dimension (second component)')
plt.title('Bottleneck Points')
plt.legend()
plt.grid()
plt.savefig(pathdrive+"encoding2.png")
plt.show()


\end{verbatim}
} 

\newpage
\section{Appendix: Review of some basic concepts and results in topology}\label{app:review}
In this appendix we review basic concepts and results in topology that are used in this paper.
We discuss general topology in \S \ref{subapp:gen-top}, finite Borel measures in \S \ref{subapp:borel}, differential topology in \S \ref{subapp:diff-top}, and algebraic topology in \S \ref{subapp:alg-top}.

\subsection{General topology}\label{subapp:gen-top}

A \concept{topology} on a set $X$ is a collection of subsets of $X$, called \concept{open}, satisfying the following three properties \citep[p.~596]{lee2013smooth}: 
\begin{itemize}
\item $X$ and $\varnothing$ are open.
\item The union of any family of open sets is open.
\item The intersection of any finite family of open sets is open.
\end{itemize}
A set $X$ equipped with a topology is called a \concept{topological space}.

A subset $C\subseteq X$ is  \concept{closed} if its complement $X\setminus C$ is open \citep[p.~596]{lee2013smooth}.
The \concept{closure} $\cl(S)$ of a subset $S\subseteq X$ of a topological space $X$ is the intersection of all closed sets containing $S$ \citep[p.~597]{lee2013smooth}.
Thus, $S\subseteq X$ is closed if and only if $\cl(S)=S$.
A subset $S\subseteq X$ is \concept{dense} if $\cl(S)=X$.

A topological space $X$ is \concept{connected} if it is not the union of any two disjoint non-empty open sets \citep[p.~607]{lee2013smooth}.
A topological space $X$ is \concept{compact} if, for any collection of open sets whose union is $X$, there is a finite subcollection whose union is $X$ \citep[p.~608]{lee2013smooth}.

Given a subset $S\subseteq X$ of a topological space, the \concept{subspace topology} is the topology on $S$ that declares a subset $U\subseteq S$ to be open in $S$ if and only if there is a subset $V\subseteq X$ open in $X$ such that $U=V\cap S$ \citep[p.~601]{lee2013smooth}.
A subset $S\subseteq X$ is \concept{connected} if it is connected in the subspace topology, and \concept{compact} if it is compact in the subspace topology \citep[pp.~607--608]{lee2013smooth}.
A \concept{(connected) component} of $X$ is a connected subset of $X$ that is not a proper subset of any larger connected subset \citep[p.~607]{lee2013smooth}.

A topological space $X$ is \concept{Hausdorff} if any pair of distinct points in $X$ are contained in some pair of disjoint open sets, and is \concept{second-countable} if there is a countable collection of open sets such that every open set in $X$ is a union of some open sets from the countable collection \citep[p.~600]{lee2013smooth}. 
Every subset of a Hausdorff space is Hausdorff in the subspace topology, and every subset of a second-countable space is second-countable in the subspace topology \citep[Prop.~A.17]{lee2013smooth}.
Only second-countable Hausdorff topological spaces appear in the body of this paper.

\begin{Ex}[{\citet[Ex.~A.6]{lee2013smooth}}]
The standard topology on Euclidean space $\R^n$ is defined as follows. A subset $U\subseteq \R^n$ is declared to be open if for each point $x\in U$ there is some $r>0$ such that the ball $N_r(x)\coloneqq \{y\in \R^n\colon \|x-y\|< r\}$ is a subset of $U$.
These open sets can be checked to satisfy the three properties above, so they define a topology on $\R^n$.
This topology is Hausdorff since any pair of points are contained in disjoint balls with positive radii, and is second-countable, as follows from the fact that every real number may be approximated by rational numbers.
The \concept{Heine-Borel} theorem asserts that a subset of $\R^n$ is compact if and only if it is closed and has bounded diameter \citep[p.~608]{lee2013smooth}.
\end{Ex}

A map $F\colon X\to Y$ between topological spaces is \concept{continuous} if the \concept{preimage} $$F^{-1}(U)\coloneqq \{x\in X\colon F(x)\in U\}$$ of any open subset of $Y$ is open in $X$ \citep[p.~597]{lee2013smooth}.
A bijective continuous map $F\colon X\to Y$ is a \concept{homeomorphism} if the inverse map $F^{-1}:Y\to X$ is continuous \citep[p.~597]{lee2013smooth}.
An injective continuous map $F\colon X\to Y$ is a \concept{topological embedding} if the codomain-restricted map $F\colon X\to F(X)$ is a homeomorphism when the \concept{image} $$F(X)\coloneqq \{F(x)\colon x\in X\}\subseteq Y$$ of $F$ is given the subspace topology inherited from $Y$ \citep[p.~601]{lee2013smooth}.

The \concept{product topology} on the Cartesian product $X\times Y$ of topological spaces $X$ and $Y$ is defined by declaring a subset $S\subseteq X\times Y$ to be open if, for each $(x,y)\in S$, there are open sets $U\subseteq X$ and $V\subseteq Y$ respectively containing $x$ and $y$ such that $U\times V \subseteq S$.

\subsection{Finite Borel measures}\label{subapp:borel}

A subset $S\subseteq X$ of a topological space $X$ is a \concept{Borel set} if it can be formed from open subsets via the operations of taking countable unions, taking countable intersections, and taking complements within $X$ \citep[p.~22]{folland1999real}.
A \concept{finite Borel measure} $\mu$ on $X$ is a map from the Borel sets to the nonnegative real numbers such that $\mu(\varnothing)=0$ and $\mu(\bigcup_{j=1}^\infty S_j)=\sum_{j=1}^\infty \mu(S_j)$ for any countable family of pairwise disjoint Borel sets $S_1,S_2,\ldots \subseteq X$  \citep[pp.~24--25]{folland1999real}.
A finite Borel measure $\mu$ on $X$ is a \concept{probability measure} if $\mu(X)=1$.

A map $F\colon X\to Y$ between topological spaces is \concept{Borel measurable} if $F^{-1}(S)$ is a Borel set in $X$ for any Borel set $S$ in $Y$.
For any Borel measurable function $f\colon X\to [0,\infty)$ and finite Borel measure $\mu$ on $X$, there is a well-defined integral $\int_X f(x)\, d\mu(x)\in [0,\infty]$ \citep[p.~50]{folland1999real}.

A finite Borel measure $\nu$ on $X$ is 
\concept{absolutely continuous} with respect to a finite Borel measure $\mu$ on $X$ if $\nu(S)=0$ whenever $\mu(S)=0$ \citep[p.~88]{folland1999real}.
In this case, the \concept{Radon-Nikodym theorem} asserts the existence of a Borel measurable function $f\colon X\to [0,\infty)$ such that $$\nu(S)=\int_S f(x)\, d\mu(x)\coloneqq \int_X \textbf{1}_S(x) f(x)\, d\mu(x)$$ 
for each Borel set $S$, where $\textbf{1}_S(x) = 1$ if $x\in S$ and  $\textbf{1}_S(x) = 0$ otherwise \citep[p.~91]{folland1999real}.

A finite Borel measure $\mu$ on $X$ is \concept{outer regular} if
$$\mu(S)= \inf\{\mu(U)\colon U\supseteq S, U \textnormal{ is open}\}$$
for all Borel sets $S\subseteq X$ \citep[p.~212]{folland1999real}.

\subsection{Differential topology}\label{subapp:diff-top}

A topological space $M$ is an \concept{$n$-dimensional} \concept{(topological) manifold with boundary} if it is second-countable, Hausdorff, and for each point $x\in M$ there is an open set $U\subseteq M$ containing $x$ that is homeomorphic to an open subset (with the subspace topology) of the \concept{closed $n$-dimensional upper half-space} \citep[p.~25]{lee2013smooth}
$$\mathbb{H}^n\coloneqq \{(x_1,\ldots, x_n)\in \R^n\colon x_n \geq 0\}.$$

A choice of homeomorphism $\varphi\colon U\to \varphi(U)\subseteq \mathbb{H}^n$ is called a \concept{chart} $(U,\varphi)$ for $M$.
We say that the chart $(U,\varphi)$ \concept{contains} the point $x\in M$ if $x\in U$. 
A point $x\in M$ is called an \concept{interior point} if the $n$-th coordinate of $\varphi(x)\in \mathbb{H}^n$ is positive for some chart $(U,\varphi)$ containing $x$, and a \concept{boundary point} otherwise.
The collection of boundary points is called the \concept{(manifold) boundary} of $M$, denoted by $\partial M$, and the complement $\interior(M)\coloneqq M \setminus \partial M$ is called the \concept{(manifold) interior} of $M$.
We say that $M$ is an \concept{$n$-dimensional (topological) manifold} if $\partial M = \varnothing$.
(Equivalently, one can define $n$-dimensional manifolds by replacing $\mathbb{H}^n$ by $\R^n$ in the definition of $n$-dimensional manifolds with boundary \citep[pp.~2--3]{lee2013smooth}.)

A map between open subsets of Euclidean spaces is \concept{smooth} if it has continuous partial derivatives of all orders.
Given an arbitrary subset $A\subseteq \R^n$, a map $F\colon A \to \R^m$ is \concept{smooth} if for each $x\in A$ there is an open set $U\subseteq \R^n$ and a smooth map $\tilde{F}\colon U\to \R^m$ whose restriction $\tilde{F}|_{U\cap A}$ coincides with $F|_{U\cap A}$ \citep[p.~645]{lee2013smooth}.
Given a subset $B\subseteq \R^m$, we say that $F:A\to B$ is \concept{smooth} if $F$ is smooth when viewed as a map into $\R^m$.

Let $M$ be an $n$-dimensional manifold with boundary.
Two charts $(U,\varphi)$, $(V,\psi)$ are called \concept{smoothly compatible} if either $U\cap V = \varnothing$ or the \concept{transition map} $\psi\circ \varphi^{-1}\colon \varphi(U\cap V)\to \psi(U\cap V)\subseteq \R^n$ is smooth (in the sense of the previous paragraph).
A \concept{smooth atlas} for $M$ is a collection of smoothly compatible charts such that the union of chart domains is $M$.
A smooth atlas for $M$ is \concept{maximal} if it is not properly contained in any larger smooth atlas.
A \concept{smooth structure} on $M$ is a maximal smooth atlas \citep[p.~28]{lee2013smooth}.

An \concept{$n$-dimensional smooth manifold with boundary} is an $n$-dimensional manifold with boundary $M$ equipped with a choice of smooth structure \citep[p.~28]{lee2013smooth}.
Such an $M$ is an \concept{$n$-dimensional smooth manifold} if $\partial M = \varnothing$.
(Equivalently, one can define $n$-dimensional smooth manifolds by replacing $\mathbb{H}^n$ by $\R^n$ in the definition of $n$-dimensional smooth manifolds with boundary \citep[pp.~4, 12--13]{lee2013smooth}.)

\begin{Ex}
 Euclidean space $\R^n$ is an $n$-dimensional manifold.
 Every $x\in \R^n$ is contained in the domain of the chart $(\R^n,\id_{\R^n})$ defined by the identity map.
 The union of this chart with all charts smoothly compatible with it defines the standard smooth structure on $\R^n$ making it a smooth manifold \citep[Ex~1.22]{lee2013smooth}. 
 Similarly, $\mathbb{H}^n$ is an $n$-dimensional manifold with boundary, and a smooth manifold with boundary when equipped with the standard smooth structure consisting of of all charts smoothly compatible with $(\mathbb{H}^n, \id_{\mathbb{H}^n})$.
\end{Ex}

Let $M$, $N$ be smooth manifolds with boundary and $A$ be an arbitrary subset of $M$.
A map $F\colon A\to N$ is \concept{smooth} if for each $x\in A$ there is a chart $(U,\varphi)$ containing $x$ and a chart $(V,\psi)$ containing $F(x)$ such that $F(U)\subseteq V$ and $\psi\circ F\circ \varphi^{-1}\colon \varphi(U\cap A) \to \psi(V)$ is a smooth map between subsets of Euclidean spaces in the sense defined above \citep[p.~45]{lee2013smooth}, \citep[p.~1]{lee2024errata}.
When $N=\R^n$ and $A$ is closed, such an $F$ always admits a \concept{smooth extension} $\tilde{F}\colon M\to \R^n$, meaning that $\tilde{F}$ is smooth and $\tilde{F}|_A = F$ \citep[Lem.~2.26]{lee2013smooth}.

A smooth map $F\colon M\to N$ is a \concept{smooth embedding} if it is a topological embedding and the inverse $F^{-1}\colon F(M)\to N$ is smooth.
(This is equivalent to the usual definition \citep[p.~85]{lee2013smooth} by the chain rule \citep[Prop.~3.6(b)]{lee2013smooth}).
A \concept{diffeomorphism} is a bijective smooth embedding \citep[p.~38]{lee2013smooth}.

Let $x$ be a point in an $n$-dimensional smooth  manifold with boundary $M$, and consider smooth curves $\gamma\colon J_\gamma\to M$ that are defined on some interval $J_\gamma\subseteq \R$ containing $0$ and satisfy $\gamma(0)=x$.
A \concept{tangent vector} at $x\in M$ is an equivalence class of such curves, where  curves $\gamma_1$, $\gamma_2$ are called equivalent if $\frac{d}{dt}\varphi(\gamma_1(t))|_{t=0}=\frac{d}{dt}\varphi(\gamma_2(t))|_{t=0}$ for some smooth chart $(U,\varphi)$ containing $x$ \citep[pp.~70, 72]{lee2013smooth}.
The \concept{tangent space} $T_x M$ at $x\in M$ is an $n$-dimensional vector space that consists of all tangent vectors at $x$.
The \concept{tangent bundle} of $M$ is the disjoint union $TM\coloneqq \bigsqcup_{x\in M} \T_x M$ of all tangent spaces, and it has a canonical topology and smooth structure making it into a $2n$-dimensional smooth manifold with boundary \citep[pp.~66--67]{lee2013smooth}.

A \concept{smooth vector field} $Y$ on a smooth manifold with boundary $M$ is a smooth map $Y\colon M\to TM$ satisfying $Y(x)\in T_x M$ for each $x\in M$ \citep[p.~175]{lee2013smooth}.
A point $p\in M$ such that $Y(p) = 0$ is called an \concept{equilibrium} (or \concept{zero}) of $Y$.
A smooth vector field is \concept{inward-pointing} if for each $x\in \partial M$ there is a curve in the equivalence class defining $Y(x)$ that is defined on an interval of the form $[0,\varepsilon)$ \citep[p.~118]{lee2013smooth}.

When $M$ is compact, an inward-pointing smooth vector field $Y$ on $M$ canonically determines a smooth map $\Phi\colon [0,\infty)\times M \to M$ such that the time-$t$ maps $\Phi^t\coloneqq \Phi(t,\cdot)$ are (dimension-preserving) smooth embeddings satisfying $\Phi^0=\id_M$ and $\Phi^{t+s}=\Phi^{t}\circ \Phi^{s}$ for all $t,s\geq 0$.
This \concept{semiflow} $\Phi$ is the unique such map with the property that each \concept{trajectory} 
$t\mapsto \Phi^{t+s}(x)$ belongs to the equivalence class $Y(\Phi^{s}(x))$. (One constructs $\Phi$ by repeating, mutatis mutandis, the proof of \citet[Thm~9.16]{lee2013smooth} for the case $\partial M = \varnothing$; cf. \citet[Thm~9.34]{lee2013smooth}).

When $p\in M$ is an equilibrium of an inward-pointing smooth vector field $Y$ with solution map $\Phi$, $\Phi^t(p)=p$ for all $t\geq 0$.
The equilibrium $p$ is called \concept{hyperbolic} if none of the eigenvalues of the Jacobian matrix $D(\varphi\circ \Phi^1\circ \varphi^{-1})(\varphi(p))$ have complex modulus equal to $1$, where $(U,\varphi)$ is a chart containing $p$.
The equilibrium $p$ is called \concept{asymptotically stable} if for every open set $V\subseteq M$ containing $p$ there is an open set $U\subseteq V$ containing $p$ such that, for each $q\in U$, the the trajectory $t\mapsto \Phi^t(q)$ takes values in $V$ and converges to $p$ as $t\to\infty$ \citep[p.~74]{pajitnov2006circle}.
The \concept{basin of attraction} of an asymptotically stable equilibrium $p\in M$ is a connected open set consisting of all $q\in M$ such that the trajectory $t\mapsto \Phi^t(q)$ converges to $p$.

A \concept{Riemannian metric} $g$ on a smooth manifold with boundary $M$ is an inner product $(Y_x, Z_x)\mapsto g(Y_x, Z_x)$ on each tangent space $T_x M$ such that $x\mapsto g(Y(x),Z(x))$ is a smooth map for any smooth vector fields $Y$, $Z$ on $M$ \citep[Prop.~12.19, pp.~327--328]{lee2013smooth}.
In particular, a Riemannian metric determines a smooth \concept{gradient} vector field $\nabla \varphi$ for each smooth function $\varphi\colon M\to \R$ \citep[p.~342]{lee2013smooth}.
If $\varphi\colon M\to [0,1]$ is smooth and $\partial M = \varphi^{-1}(1)$, then $\nabla \varphi$ is inward-pointing.

A Riemannian metric on a compact smooth manifold with boundary $M$ also determines an entity, called the \concept{Riemannian density} \citep[Prop.~16.45]{lee2013smooth}, that can be integrated over Borel measurable subsets of $M$ (cf. \citet[p.~431]{lee2013smooth}) to define a finite Borel measure $\mu$ on $M$ that is outer regular (\S \ref{subapp:borel}),  \citet[Thm~7.8]{folland1999real}.
A Borel set $A\subseteq M$ is called \concept{measure zero} if $\mu(A)=0$.
By construction, changing the Riemannian metric changes $\mu$ to a finite Borel measure $\nu$ such that $\mu$, $\nu$ are absolutely continuous with respect to each other, so the property of being measure zero is well-defined independent of the choice of Riemannian metric.
Alternatively, one can define ``measure zero'' without referring to any Riemannian metric \citep[p.~128]{lee2013smooth}.
If $F\colon M\to N$ is a smooth map between $n$-dimensional smooth manifolds with boundary and $A\subseteq M$ has measure zero, then $F(A)\subseteq N$ is also measure zero \citep[Thm~5.9]{lee2013smooth}.

``Measure zero'' provides one notion of what it means for a subset of a smooth manifold with boundary $M$ to be ``small''.
An alternative topological ``smallness'' notion for subsets is ``meager'' .
A subset $S\subseteq M$ is \concept{nowhere dense} if $M\setminus \cl(S)$ is dense, and is \concept{meager} if it is a countable union of nowhere dense sets \citep[p.~161]{folland1999real}.
The \concept{Baire category theorem} asserts that the complement $M\setminus S$ of any meager set $S$ is dense \citep[Thm~A.58]{lee2013smooth}, \citep[Thm~5.9]{folland1999real}.

A \concept{diffeotopy} (or \concept{ambient isotopy}) of a smooth manifold with boundary $M$ is a smooth map $J\colon [0,1]\times M\to M$ such that each time-$t$ map $J_t\coloneqq J(t,\cdot)$ is a diffeomorphism and $J_0=\id_M$ \citep[p.~178]{hirsch1994differential}.
The \concept{support} of a diffeotopy $J$ of $M$ is the closure in $M$ of the set $$\{x\in M\colon J_t(x)\neq x \textnormal{ for some $t\in [0,1]$}\}.$$
Given a diffeotopy $\partial J$ of $\partial M$ and a diffeotopy $\tilde{J}$ of $\interior(M)$ with compact support $S\subseteq \interior(M)$, the \concept{isotopy extension theorems} assert the existence of a diffeotopy $J$ of $M$ such that $J_t|_{\partial M}=\partial J_t$ and $J_t|_{S}=\tilde{J}_t|_S$ for each $t\in [0,1]$ \citep[Thm~8.1.3, 8.1.4]{hirsch1994differential}.

Let $M$ be a $k$-dimensional smooth manifold with boundary that is a subset of an $n$-dimensional smooth manifold with boundary $N$, such that the topology on $M$ is the subspace topology inherited from $N$.
If the inclusion map $M\hookrightarrow N$ is a smooth embedding, then $M$ is called a \concept{smoothly embedded submanifold with boundary} of $N$ \citep[p.~120]{lee2013smooth}.
When $\partial N=\varnothing$, such an $M$ has the property that each $x\in M$ is contained in a chart $(U,\varphi)$ for $N$ such that $\varphi(U\cap M)$ is an open subset of the intersection of a $k$-dimensional affine subspace with $\mathbb{H}^n$ \citep[Thm~5.51]{lee2013smooth}. 
Conversely, if $\partial N = \varnothing$ and $M\subseteq N$ is any subset of $N$ with this property, then with the subspace topology, $M$ has a smooth structure making it into a $k$-dimensional smoothly embedded submanifold with boundary of $N$ \citep[Thm~5.51]{lee2013smooth}.

If $M$ is a smoothly embedded submanifold with boundary of a smooth manifold with boundary $N$, then any Riemannian metric on $N$ canonically induces a Riemannian metric on $N$ \citep[p.~333]{lee2013smooth}.
Thus, if $N=\R^n$, a smoothly embedded submanifold with boundary $M\subseteq N$ canonically inherits a Riemannian metric from the Euclidean inner product, since the latter is a Riemannian metric on $\R^n$ called the \concept{Euclidean metric} \citep[Ex.~13.1]{lee2013smooth}.
In this case, we refer to the finite Borel measure $\mu$ on $M$ determined from the Euclidean-induced metric as the \concept{intrinsic measure}.
If such an $M$ is $k$-dimensional, then $\mu(S)$ is simply the $k$-dimensional volume  of $S$.
In particular, $\mu(S)$ is the length of $S$ when $k=1$, the surface area of $S$ when $k=2$, the volume of $S$ when $k=3$, and so on.

Given Euclidean spaces $\R^\ell$ and $\R^m$, the \concept{compact-open topology} on the space $\mathcal{C}(\R^m,\R^\ell)$ of continuous maps $\R^\ell\to\R^m$ is defined as follows.
A subset $\mathcal{S}\subseteq \mathcal{C}(\R^m,\R^\ell)$ is \concept{open} if, for each $f\in \mathcal{S}$, there is a compact set $K\subseteq \R^\ell$ and $\varepsilon > 0$ such that any $g\in \mathcal{C}(\R^m,\R^\ell)$ satisfying $\max_{x\in K}\|f(x)-g(x)\| < \varepsilon$ belongs to $\mathcal{S}$ \citep[p.~58]{hirsch1994differential}. 
The \concept{composition map}
$$\mathcal{C}(\R^n,\R^m)\times \mathcal{C}(\R^m,\R^\ell)\to \mathcal{C}(\R^n,\R^\ell), \qquad (g,f)\mapsto g\circ f$$
is continuous with respect to the compact-open topologies (and the product topology on the domain) \citep[p.~64, Ex.~10(a)]{hirsch1994differential}.

\subsection{Algebraic topology}\label{subapp:alg-top}
The \concept{standard $n$-simplex} $\Delta^n\subseteq \R^{n+1}$ is the convex hull of the standard basis vectors for $\R^{n+1}$, equipped with the subspace topology \citep[p.~103]{hatcher2002algebraic}. 

Let $X$ be a topological space.
A \concept{singular $n$-simplex} in $X$ is a continuous map $\sigma\colon \Delta^n\to X$ \citep[p.~108]{hatcher2002algebraic}.
A \concept{singular $n$-chain} with coefficients in the abelian group $\Z_2 \coloneqq \Z/2\Z$ is a finite formal linear combination $\sum_i n_i \sigma_i$, where each $n_i\in \Z_2$ and each $\sigma_i$ is a singular $n$-simplex in $X$ \citep[pp.~153, 108]{hatcher2002algebraic}.
The set of all singular $n$-chains in $X$ is an abelian group $C_n(X;\Z_2)$ \citep[p.~153]{hatcher2002algebraic}.
There are well-defined group homomorphisms $\partial_n\colon C_n(X;\Z_2) \to C_{n-1}(X;\Z_2)$, called \concept{boundary operators}, that satisfy $\partial_n \circ \partial_{n+1}=0$ \citep[pp.~153, 108]{hatcher2002algebraic}.
Thus, the image $B_{n}(X;\Z_2)$ of $\partial_{n+1}$ is contained in the kernel $Z_n(X;\Z_2)$ of $\partial_{n}$, so the \concept{$n$-th singular homology group with coefficients in $\Z_2$} is well-defined as the quotient group \citep[pp.~153, 108]{hatcher2002algebraic}
$$H_n(X;\Z_2)\coloneqq Z_{n}(X;\Z_2)/B_{n}(X;\Z_2).$$
From a certain point of view, $H_n(X;\Z_2)$ counts the number of ``$n$-dimensional holes'' in $X$ (cf. \citet[p.~100, Thm~2.27, p.~153]{hatcher2002algebraic}).

Let $X$ be a $k$-dimensional manifold (i.e., without boundary).
It is a fact that $H_n(X;\Z_2) = 0$ for all $n\geq k$ when $X$ is noncompact, and that $H_n(X;\Z_2) = 0$ for all $n> k$ when $X$ is compact \citep[p.~236, Prop.~3.29]{hatcher2002algebraic}.
Unlike the noncompact case, $H_k(X;\Z_2) = \Z_2\neq 0$ when $X$ is compact \citep[p.~236]{hatcher2002algebraic}.

Any continuous map $f\colon X\to Y$ between topological spaces induces a well-defined homomorphism $f_*\colon H_n(X;\Z_2)\to H_n(Y;\Z_2)$ for each integer $n$ by sending the equivalence class of an $n$-chain $\sum_i n_i \sigma_i$ to the equivalence class of the $n$-chain $\sum_i n_i f\circ \sigma_i$ \citep[p.~111]{hatcher2002algebraic}.

A \concept{homotopy} is a continuous map $h\colon [0,1]\times X\to Y$, and is a \concept{homotopy from $f\colon X\to Y$ to $g\colon X\to Y$} if $f=h(0,\cdot)$ and $g = h(1,\cdot)$ \citep[p.~3]{hatcher2002algebraic}.
Two maps $f,g\colon X\to Y$ are \concept{homotopic} if there is a homotopy from $f$ to $g$ \citep[p.~3]{hatcher2002algebraic}.

A fundamental result called \concept{homotopy invariance} asserts that homotopic maps $f, g\colon X\to Y$ induce the same homomorphisms on homology, i.e., $f_*\colon H_n(X;\Z_2)\to H_n(Y;\Z_2)$ coincides with $g_*\colon H_n(X;\Z_2)\to H_n(Y;\Z_2)$ for each integer $n$ \citep[Thm~2.10, p.~153]{hatcher2002algebraic}.
\end{document}